\author[1]{Henry Kenlay}
\author[2]{Dorina Thanou}
\author[1]{Xiaowen Dong}
\affil[1]{University of Oxford}
\affil[2]{EPFL}
\newcommand{\G}{\mathcal{G}} 
\newcommand{\V}{\mathcal{V}} 
\newcommand{\A}{\mathbf{A}} 
\newcommand{\D}{\mathbf{D}} 
\newcommand{\E}{\mathbf{E}} 
\newcommand{\lap}{\mathbf{L}} 
\newcommand{\gso}{\mathbf{\Delta}} 
\newcommand{\R}{\mathbb{R}} 
\newcommand{\x}{\mathbf{x}} 
\newcommand{\U}{\mathbf{U}} 
\newcommand{\equaldef}{\overset{\underset{\mathrm{def}}{}}{=}} 
\DeclareMathOperator*{\diag}{diag} 
\DeclareMathOperator*{\argmax}{arg\,max} 
\DeclareMathOperator*{\clamp}{clamp} 
\DeclareMathOperator{\sign}{sign}
\DeclarePairedDelimiter\set{\lbrace}{\rbrace} 
\DeclarePairedDelimiter\norm{\lVert}{\rVert} 
\DeclarePairedDelimiter\abs{\lvert}{\rvert} 
\newtheorem{lemm}{Lemma}
\newtheorem{prop}{Proposition}
\newtheorem{thm}{Theorem}
\newtheorem{corr}{Corollary}
\newtheorem{defn}{Definition}
\newcommand{\beginsupplement}{
    \setcounter{table}{0}
    \renewcommand{\thetable}{S\arabic{table}}
    \setcounter{figure}{0}
    \renewcommand{\thefigure}{S\arabic{figure}}} 
\title{Interpretable Stability Bounds for Spectral Graph Filters}
\begin{document}

\maketitle

\begin{abstract}
Graph-structured data arise in a variety of real-world context ranging from sensor and transportation to biological and social networks. 
As a ubiquitous tool to process graph-structured data, spectral graph filters have been used to solve common tasks such as denoising and anomaly detection, as well as design deep learning architectures such as graph neural networks. Despite being an important tool, there is a lack of theoretical understanding of the stability properties of spectral graph filters, which are important for 
designing robust machine learning models.
In this paper, we study filter stability and provide a novel and interpretable upper bound on the change of filter output, where the bound is expressed in terms of the endpoint degrees of the deleted and newly added edges, as well as the spatial proximity of those edges. This upper bound allows us to reason, in terms of structural properties of the graph, when a spectral graph filter will be stable. We further perform extensive experiments to verify intuition that can be gained from the bound. 
\end{abstract}

\section{Introduction}
\label{sec:introduction}
A graph is a general-purpose data structure that uses edges to model pairwise interactions between entities, which are modelled as the nodes of the graph. Many types of data in the real world reside on graph domains, such as those collected in sensor, biological, and social networks. 
This has sparkled a major interest in recent years in developing machine learning models for graph-structured data \citep{chami2020machine}, leading to the fast-growing fields of graph signal processing \citep{shuman2013emerging,ortega2018graph} and geometric deep learning \citep{bronstein2017geometric}.

Spectral graph filters, generalisation of classical filters to the graph domain via spectral graph theory \cite{chung1997spectral}, are an ubiquitous tool designed to process graph-structured data.
In addition to various signal processing tasks \citep{shuman2013emerging,ortega2018graph}, graph filters are becoming an important tool for machine learning tasks defined on graphs \citep{dong2020graph}. For example, they have been used to define convolution on graphs and design graph neural networks, which lead to state-of-the-art performance in both node and graph classification \citep{bruna14spectral,chebynet,kipf17semi,cayleynet,wu2019simplifying,rossi2020sign,balcilar2020bridging}.

Despite the surge of research proposing new graph-based machine learning models, significantly less attention has been paid to the understanding of theoretical properties, such as stability, of existing models, in particular graph filters.
Informally, a filter is considered to be stable against a perturbation if, after being applied to a signal, it does not lead to large changes in the filter output. In the context of graph-structured data, stability can be defined with respect to perturbation to the signal or the underlying topology. We focus on the latter in this work as graph filters typically act on a graph topology. 

Stability is important mainly for two reasons. First, real-world graph-structured data often come with perturbations, either due to measurement error or inaccuracy in the graph construction.
Second, when these data are used in machine learning tasks, stability of the graph filters is important to  designing learning algorithms that are robust to small perturbations. As a practical example, graph filters are often used to extract spatio-temporal predictive features from fMRI signals realised as signals on a structural brain network. The underlying structural brain network is typically an approximation of the true brain connectivity and therefore the topology will inherently be noisy. Nevertheless, we would desire the predictions, and thus the filtering process, to be robust to the noise inherent in this data.


There has only been a handful of papers considering the stability of spectral graph filters. Among the existing works \citep{levie2019transferability,polynomialstability,gama2020stability} which address this, most provide an upper bound on the change of the filter output. These upper bounds are in terms of the magnitude of perturbation and lack interpretation in terms of how the structure of the graph has changed, for example the degree of the nodes. This limitation hampers the design of strategies that could defend efficiently against potential adversaries. A notable exception in the literature is the recent work of \citet{kenlay2020stability}; however, this work only considers degree preserving edge rewiring which is a stringent assumption that does not cover many perturbations observed in practical scenarios. 

In this work, we provide a novel upper bound for the output change of spectral graph filters under topological perturbation, i.e., edge deletions and additions, of an unweighted and undirected graph. Unlike previous works, our bound is interpretable in the sense that it is expressed in terms of the structural properties of the graph and the perturbation. The bound helps us understand sufficient conditions under which a spectral graph filter will be stable. Specifically, we show that, when edges are deleted and added to a graph to obtain the perturbed graph, the filter will be stable if 1) the endpoints of these edges have sufficiently high degree, and 2) the perturbation is not concentrated spatially around any one node. 
We further verify the intuition behind our theoretical results using synthetic experiments.

Our study has two main contributions. First, to the best of our knowledge, our theoretical analysis is one of the first that provides sufficient conditions for a graph filter to be robust to the perturbation, where the conditions are in terms of the structural properties rather than the magnitude of change.
Second, unlike previous theoretical studies, we perform extensive experiments to validate the intuition gained from the bound. In particular, we examine how the filter output changes for a range of perturbation strategies including random strategies, an adversarial attack strategy and a robust strategy which is derived from insight that the bound provides. Furthermore, we experiment with a range of random graph models and examine how different properties of these graphs, for example the degree distribution, have an effect on the filter stability.
Overall, we believe this study fills an important gap in our understanding of spectral graph filters, and future work based on these ideas can have broad implications for understanding and designing robust graph-based machine learning models that utilise graph filters, most notably a wide range of designs of graph neural networks \citep{balcilar2020bridging}.

\section{Related work}
\label{sec:related}
\subsection{Stability of graph filters}
Stability of graph filters has been mainly studied by characterising the magnitude of perturbation caused by changes to a graph shift operator (GSO) under the operator norm.
One such effort is the work of \citet{levie2019transferability}, where filters are shown to be stable in the Cayley smoothness space, with the output change being linearly bounded. 
The main limitations of this result is that the constant which depends on the filter is not easily interpretable and the bound is only valid for sufficiently small perturbation. 
In a similar vein, \citet{polynomialstability} proves that polynomial graph filters are linearly bounded under changes to the shifted normalised Laplacian matrix 
by applying  Taylor's theorem for matrix functions \citep{deadman2016taylor}. 
We build upon this work by giving a tighter bound for a larger class of filters, and providing theoretical basis for how the magnitude of change in the Laplacian matrix relates to change in the structural properties of the graph such as the degree of the nodes and the distribution of the perturbed edges.

Studying perturbation with respect to operator norm does not provide invariance to relabelling of nodes.  The authors in \citet{gama2020stability} address this issue by proposing the Relative Perturbation Modulo Permutation, that considers all permutations of the perturbed GSO. 
This measure is at least as hard to compute as the graph isomorphism test for which no polynomial time algorithm is known \citep{babai2016graph}. We believe that incorporating permutations may be beneficial in certain cases but not others. For example, the node labelling in polygon meshes is arbitrary and so the distance should be invariant to the labelling. On the other hand, node labelling in social networks corresponds to a user ID and therefore should remain fixed when measuring the change in the graph. 

The work most related to ours is by \citet{kenlay2020stability} as it  provides structurally interpretable bound. 
The authors show that under degree preserving edge rewiring the change of spectral graph filters applied to the augmented adjacency matrix depends on the locality of edge rewiring and the degree of the endpoints of the edges being rewired. However, they only consider a specific type of perturbation, which greatly simplifies but at the same time limits the analysis. We consider more general perturbations in this work and derive a similar result as a special case. 

\subsection{Adversarial attack and defence}
Adversarial attacks are an optimisation based data-driven approach to finding perturbations for which graph-based models are not robust. In particular, it has been shown that the output of graph neural networks can change drastically even under small changes to the input generated from   adversarial examples  \citep{zugner2018adversarial, sun2018adversarial}.  As our bound necessarily covers worst-case scenarios, adversarial attacks provide insight into the tightness of our bound. 

Complimentary to this line of work, adversarial defence is concerned with the design of models robust to adversarial attacks. In the context of adversarial defence, our work is tangentially related to certified robustness, whose goal is to theoretically guarantee that some nodes will remain correctly classified, e.g., in a semi-supervised node classification task, under small but arbitrary perturbation \citep{bojchevski2019certifiable}.  We are instead interested in certifying what kind of perturbation will lead to small changes in the output of a fixed graph filter. 


\section{Preliminaries and problem formulation}
\label{sec:preliminaries}
We define a graph $\G = (\V, \mathcal{E})$ where $\V$ is a set of $n$ vertices and $\mathcal{E}$ a set of edges. 
We write $u \sim v$ if node $u$ is connected to $v$ and $u \not \sim v$ otherwise. By fixing a labelling on the nodes we can encode $\G$ into a binary adjacency matrix $\A \in \set{0, 1}^{n \times n}$. 
The degree $d_u$ of a node $u$ indicates the number of nodes connected to $u$ and we define the degree matrix as $\D= diag (d_1, \ldots, d_n)$. A node $u$ is said to be isolated if it has degree zero. The normalised Laplacian matrix is defined as $\lap = \mathbf{I}_n - \D^{-1/2}\A \D^{-1/2}$ where $\mathbf{I}_n$ is the identity matrix of dimension $n$ and conventionally the entry $\D^{-1/2}_{uu}$ is set to zero if the node $u$ is isolated. The entries of $\lap$ can be explicitly written as 
\begin{equation}
\label{eq:Laplacian}
\lap_{uv} = \begin{cases}
    1  &\textnormal{if }u = v \\
    \frac{-1}{\sqrt{d_ud_v}}  &\textnormal{if }u \sim v \textnormal{ and } u \not = v \\
    0  &\textnormal{otherwise} \\
    \end{cases}.
\end{equation}
The normalised Laplacian matrix is an example of a GSO, a generalisation of the shift operator from classical signal processing which can be used as a building block to construct a graph signal processing framework \citep{ortega2018graph}. The matrix $\lap$ is real and symmetric, and thus has an eigendecomposition $\lap= \U \mathbf{\Lambda} \U^T$ where $\mathbf{\Lambda}=\diag(\lambda_1 \ldots \lambda_n)$ are the eigenvalues such that $0=\lambda_1 \leq \ldots \leq \lambda_n \leq 2$ and $\mathbf{U}$ is the matrix where the columns are the corresponding unit norm eigenvectors. 

We can define a graph signal $x : \V \rightarrow \R$ as an assignment of each node to a scalar value; this can be compactly represented by a vector $\x$ such that $\x_i = x(i)$. The graph Fourier transform can be defined as $\hat \x = \U^T \x$ and the inverse graph Fourier transform is then given by $\x = \U \hat \x$. With a notion of frequency, filtering signals on graphs amounts to amplifying and attenuating the frequency components in the graph Fourier domain, i.e., $y = \U \diag \large( g(\lambda_1), \ldots, g(\lambda_n) \large) \U^T x = \U g(\Lambda) \U^T \x = g(\lap) \x$, where $g(\cdot)$ is a function over the range of eigenvalues that corresponds to the characteristics of the filter. 

We are primarily concerned with the stability of spectral graph filters where the filter parameters are fixed. This scenario is relevant to hand-tuned filters or during inference of a pre-trained model. An adversarial attack in this setting is known as an evasion attack.
Specifically, we consider edge deletions and additions to the graph to give a perturbed graph $\G_p$ and use $\lap_p$ to denote the normalised Laplacian of $\G_p$. 
We consider the magnitude of the error matrix, i.e., $\norm{\E}_2 = \norm{\lap_p - \lap}_2$, which we call the error norm where $\norm{\cdot}_2$ is the operator norm when applied to matrices and the $\ell_2$-norm when applied to vectors. 
We will also consider the matrix one norm $\norm{\E}_1 = \max_i \sum_j |\E_{ij}|$ and the matrix infinity norm $\norm{\E}_\infty = \max_j \sum_i |\E_{ij}|$. If we denote $\E_u$ as the row corresponding to node $u$ of $\E$ we can write the matrix one norm as $\norm{\E}_1 = \max_u \norm{\E_u}_1$ where $\norm{\cdot}_1$ is the Manhattan or $\ell_1$-norm when applied to vectors.

The goal of this study is two-fold: 1) understand how the relative output of a filter changes when we perturb the topology of the underlying graph; 2) what is the impact of the structural properties of the perturbation on filter stability. In particular, the structural properties we consider are the degree of the nodes before and after perturbation and how much the perturbation is concentrated around each node. We address the first goal in Section \ref{sec:linearlystable} and the second in Section \ref{sec:interpretable}. We experimentally validate the insights gained by our bound in Section \ref{sec:experiments}.


\section{Linearly stable filters}
\label{sec:linearlystable}

Our notion of stability is based on relative output distance defined as 
\begin{equation}
\label{eq:rod}
    \frac{\norm{g(\gso)\x - g(\gso_p)\x}_2}{\norm{\x}_2},
\end{equation}
where $g$ is a spectral graph filter, $\x$ is an input graph signal and $\gso$ is the GSO of the graph $\G$ (similarly $\gso_p$ is the GSO of $\G_p$). If we assume $\x$ has unit norm then the above is equivalent to absolute output distance. We can bound this quantity by what we call the filter distance which measures the largest possible relative output change of the filter over non-zero signals:
\begin{align}
\frac{\norm{g(\gso)\x - g(\gso_p)\x}_2}{\norm{\x}_2} \leq \max_{\x\not = 0} \frac{\norm{g(\gso)\x - g(\gso_p)\x}_2}{\norm{\x}_2} \equaldef \norm{g(\gso) - g( \gso_p)}_2. \label{eq:filter_dist}
\end{align}
In \citep{polynomialstability}, the authors bound the filter distance of a graph filter $g$ where $g$ is a polynomial. The bound is given by some constant times the error norm $\norm{\E}_2$, where the constant depends on the filter. When a filter satisfies this property we say it is linearly stable which we define as follows. 
\begin{defn}
A spectral graph filter $g:\R \to \R$ is said to be linearly stable with respect to a type of graph shift operators, if for any graph shift operators $\mathbf{\Delta}$ and $\mathbf{\Delta}_p$ of this type, we have that \begin{equation} \label{eq:linearlystabledefn}
\norm{g(\mathbf{\Delta}) - g(\mathbf{\Delta}_p)}_2 \leq C \norm{\E}_2\end{equation} for some positive constant $C \in \R$. The positive constant $C$ is referred to as the stability constant. 
\end{defn}

\begin{table*}
\centering
\caption{Examples of linearly stable graph filters used for machine learning.}
\scalebox{0.9}{
\begin{tabular}{ccccc} \toprule
Filter & Functional form & GSO & Stability constant $C$ & Use \\ \midrule 
Polynomial filter & $\sum_{k=0}^K \theta_k \lambda^k$ & $\frac{2\lap}{\lambda_{\max}}-\mathbf{I}_n$ & $\sum_{k=1}^K k \abs{\theta_k}$ & Chebynet \citep{chebynet} \\
\vspace{0.1cm} Low-pass filter & $(1+\alpha \lambda)^{-1}$ & $\lap$ & $\alpha$ & Low-pass filtering \citep{ramakrishna2020user} \\ 
Monomial & $\lambda^K$ & $\tilde{\mathbf{D}}^{-1/2}\tilde{\mathbf{A}}\tilde{\mathbf{D}}^{-1/2}$ & $K$ & Simple GCN \citep{wu2019simplifying} \\ 
Identity & $\lambda$ & $\tilde{\mathbf{D}}^{-1/2}\tilde{\mathbf{A}}\tilde{\mathbf{D}}^{-1/2}$ & $1$ & GCN \citep{kipf17semi} \\
\bottomrule
\end{tabular}}
\label{table:lipschitz}
\end{table*}

Two types of filters of particular interest are the polynomial filters, i.e., $g(\lambda)=\sum_{k=0}^K \theta_k \lambda^k$, where $\{\theta_k\}_{k=0}^K$ are the polynomial coefficients, and the low-pass filters, i.e., $g(\lambda)=(1+\alpha \lambda)^{-1}$, where $\alpha>0$ is some constant. Polynomial filters are used in a variety of graph-based machine learning. We list some of them in Table \ref{table:lipschitz}.
It was recently proved that polynomial filters are linearly stable with respect to the shifted normalised Laplacian matrix $\lap - \mathbf{I}_n$ \cite{polynomialstability}. A simpler proof with a tighter bound (smaller stability constant) was given to show linear stability with respect to the augmented adjacency matrix $\tilde \D^{-1/2}\tilde \A \tilde \D^{-1/2}$ where $\tilde \A = \A + \mathbf{I_n}$ and $\tilde \D = \D + \mathbf{I_n}$ \cite{kenlay2020stability}.
In addition, the following more general result holds.

\begin{prop}
Polynomial filters $g(\lambda)=\sum_{k=0}^K \theta_k \lambda^k$ are linearly stable with respect to any GSO where the spectrum lies in $[-1, 1]$. The stability constant is given by $C=\sum_{k=1}^K k \abs{\theta_k}.$
\end{prop}

Proof: See Supplementary Material. 
Another important class of filters are low-pass filters, which are linearly stable with respect to the normalised Laplacian matrix. 
\begin{prop}
\label{prop:lowpassfilter}
The low-pass filter $g(\lambda)=(1+\alpha \lambda)^{-1}$ is linearly stable with respect to the normalised Laplacian matrix. The constant is given by $C=\alpha$. 
\end{prop}
Proof: See Supplementary Material.
A thorough characterisation of linearly stable filters is beyond the scope of this work. Instead, this section serves to motivate why we are interested in analysing the magnitude of $\norm{\E}_2$: some common types of filters are stable to small perturbation when the perturbation is measured by the error norm $\norm{\E}_2$. Although this is an intuitive choice, it is not immediately clear how $\norm{\E}_2$ is related to the characteristics of the structural properties of the perturbation. This motivates us to provide an upper bound on $\norm{\E}_2$ in Section \ref{sec:interpretable} in terms of interpretable characteristics in the structural domain.

\section{Interpretable bound on filter output change}
\label{sec:interpretable}
In this section we bound $\norm{\E}_2$ by interpretable properties relating to the structural change. Given a perturbation and a node $u$ we denote $\mathcal{A}_u$,  $\mathcal{D}_u$, and $\mathcal{R}_u$ as the set of adjacent nodes for newly added edges, deleted edges, and remaining edges around $u$, respectively.
We denote $\Delta_u^+=\abs{\mathcal{A}_u}$ and $\Delta_u^-=\abs{\mathcal{D}_u} < d_u$ the number of edges added and deleted around $u$, respectively, and $\Delta_u = \Delta_u^+-\Delta_u^-$ as the change of degree. We denote $d_u' = d_u + \Delta_u$ as the degree of node $u$ in $\G_p$. We define $\alpha_u = \max_{v \in \mathcal{N}_u \cup \set{u}} \abs{\Delta_v}/d_v$, where $\mathcal{N}_u$ is the 1-hop neighbourhood of node $u$, as the maximum relative change in degree among a node $u$ and its neighbours. In addition, we define $\delta_u = \min_{v \in \mathcal{N}_u}d_v$ as the smallest degree of the nodes neighbouring node $u$, and $\delta_u'$ as the same quantity in the perturbed graph. We assume that both the graph $\G$ and the perturbed graph $\G_p$ do not contain isolated nodes. 

Our approach to upper bounding $\norm{\E}_2$ relies on the inequality $\norm{\E}_2^2 \leq \norm{\E}_1 \norm{\E}_\infty$ \citep[Section 6.3]{higham2002accuracy}. As $\E$ is Hermitian, $\norm{\E}_1 = \norm{\E}_\infty$ thus simplifying this inequality to become $\norm{\E}_2 \leq \norm{\E}_1.$ There may exist strategies which give tighter bounds, but the benefit of this approach is that $\norm{\E}_1$ leads to an interpretation in the structural domain. 
Thus, we are making use of the following inequality
\begin{equation}
\label{eq:norm2leqmax}
    \norm{\E}_2 \leq \norm{\E}_1 = \max_{u \in \V} \norm{\E_u}_1.
\end{equation}
By considering how the entries of $\lap$ in Eq.~(\ref{eq:Laplacian}) change, we have the following closed-form expression for $\norm{\E_u}_1$:
\begin{align}
\label{eq:Eu}
\norm{\E_u}_1 ={} \sum_{v \in \mathcal{D}_u} \frac{1}{\sqrt{d_u d_v}} + \sum_{v \in \mathcal{A}_u} \frac{1}{\sqrt{d_u'd_v'}} + \sum_{v \in \mathcal{R}_u} \abs{\frac{1}{\sqrt{d_ud_v}} - \frac{1}{\sqrt{d_u'd_v'}}}.
\end{align}
The results of this section bound the three terms in this expression, leading to an overall bound to $\norm{\E_u}_1$ hence to $\norm{\E}_1$ and $\norm{\E}_2$.
We proceed by bounding each of the terms in Eq.~(\ref{eq:Eu}).

\subsection{Bounding the error norm}
\label{subsection:boundEu}
Recall that $\delta_u$ is the smallest degree in the neighbourhood of a node $u$, allowing us to bound the first term in Eq.~(\ref{eq:Eu}) by replacing $d_v$ with $\delta_u \leq d_v$ in the denominator to give:
\begin{equation}
\label{eq:del}
\sum_{v \in \mathcal{D}_u} \frac{1}{\sqrt{d_u d_v}} \leq \sum_{v \in \mathcal{D}_u} \frac{1}{\sqrt{d_u \delta_u}} = \frac{\Delta_u^-}{\sqrt{ d_u \delta_u}}.
\end{equation}
Similarly, we can bound the second term in Eq.~(\ref{eq:Eu}) as:
\begin{equation}
\label{eq:add}
\sum_{v \in \mathcal{A}_u} \frac{1}{\sqrt{d_u'd_v'}} \leq  \sum_{v \in \mathcal{A}_u} \frac{1}{\sqrt{d_u'\delta_u}} = \frac{\Delta^+_u}{\sqrt{ d_u' \delta_u'}}.
\end{equation}
To bound the third term in Eq.~(\ref{eq:Eu}), we first introduce the following lemma.
\begin{lemm}
\label{lem:alpha}
Let $\alpha_u \in [0, 1)$. Then the following holds:
\begin{align}
\sum_{v \in \mathcal{R}_u} \abs{\frac{1}{\sqrt{d_ud_v}} - \frac{1}{\sqrt{d_u'd_v'}}} \leq \sum_{v \in \mathcal{R}_u} \left(\frac{\alpha_u}{1-\alpha_u}\right)\frac{1}{\sqrt{d_u d_v}} \leq  \left(\frac{\alpha_u}{1-\alpha_u}\right)\frac{d_u-\Delta_u^-}{\sqrt{ d_u \delta_u}}.
\end{align}
\end{lemm}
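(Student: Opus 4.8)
The plan is to collapse the sum of pairwise differences onto a single scalar inequality by factoring out the unperturbed term. For each remaining edge $v \in \mathcal{R}_u$ I would write the perturbed weight relative to the unperturbed one,
\[
\frac{1}{\sqrt{d_u' d_v'}} = \frac{1}{\sqrt{d_u d_v}} \cdot \frac{1}{\sqrt{r_u r_v}}, \qquad r_w = \frac{d_w'}{d_w} = 1 + \frac{\Delta_w}{d_w},
\]
so that
\[
\abs{\frac{1}{\sqrt{d_u d_v}} - \frac{1}{\sqrt{d_u' d_v'}}} = \frac{1}{\sqrt{d_u d_v}}\,\abs{1 - \frac{1}{\sqrt{r_u r_v}}}.
\]
The whole lemma then reduces to bounding the scalar factor $\abs{1 - (r_u r_v)^{-1/2}}$ uniformly by $\alpha_u/(1-\alpha_u)$.

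The enabling structural observation is that, for a \emph{remaining} edge, the endpoint $v$ lies in the one-hop neighbourhood $\mathcal{N}_u$ and $u$ itself lies in $\mathcal{N}_u \cup \set{u}$; hence both relative degree changes are controlled by the definition of $\alpha_u$, giving $\abs{\Delta_u}/d_u \leq \alpha_u$ and $\abs{\Delta_v}/d_v \leq \alpha_u$. Consequently each of $r_u, r_v$ lies in $[1-\alpha_u,\, 1+\alpha_u]$, and since $\alpha_u < 1$ these endpoints are positive, so $s := \sqrt{r_u r_v}$ also lies in $[1-\alpha_u,\, 1+\alpha_u]$.

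The main work is the scalar maximisation, which I expect to be the only non-routine step. Writing $\abs{1 - 1/s} = \abs{s-1}/s$ and optimising over $s \in [1-\alpha_u,\, 1+\alpha_u]$, I would split into two branches: for $s \geq 1$ the quantity $1 - 1/s$ is increasing and at most $\alpha_u/(1+\alpha_u)$, while for $s \leq 1$ the quantity $1/s - 1$ is decreasing and therefore maximised at the lower endpoint $s = 1-\alpha_u$, giving $\alpha_u/(1-\alpha_u)$. The lower-endpoint value dominates, so $\abs{1 - 1/s} \leq \alpha_u/(1-\alpha_u)$ throughout. This is precisely where the asymmetric factor $\alpha_u/(1-\alpha_u)$ is produced, and where the hypothesis $\alpha_u < 1$ is essential: it keeps the denominator positive and the bound finite.

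Substituting this scalar bound back and summing over $\mathcal{R}_u$ yields the first inequality. The second inequality follows from two elementary replacements that mirror the deletion and addition bounds of Eqs.~(\ref{eq:del}) and (\ref{eq:add}): replacing $d_v \geq \delta_u$ in each denominator since $v \in \mathcal{N}_u$, and counting $\abs{\mathcal{R}_u} = d_u - \Delta_u^-$ because the remaining edges are exactly the original $d_u$ edges minus the $\Delta_u^-$ deleted ones. Everything beyond the scalar maximisation is bookkeeping.
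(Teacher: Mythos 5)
Your proof is correct, and it takes a genuinely different and more economical route than the paper's. The paper treats the left-hand summand as a bivariate function $f(\Delta_u,\Delta_v)$ on the rectangle $[-\alpha_u d_u,\alpha_u d_u]\times[-\alpha_u d_v,\alpha_u d_v]$, rules out interior critical points by computing partial derivatives, argues the maximum must sit at one of the four corners, and then compares the corner values pairwise to single out $(-\alpha_u d_u,-\alpha_u d_v)$. You instead factor multiplicatively, $\frac{1}{\sqrt{d_u'd_v'}}=\frac{1}{\sqrt{d_ud_v}}\cdot\frac{1}{\sqrt{r_ur_v}}$, observe that the geometric mean $s=\sqrt{r_ur_v}$ of two numbers in $[1-\alpha_u,1+\alpha_u]$ stays in that interval, and reduce the whole problem to maximising $\abs{1-1/s}$ over that interval by monotonicity on either side of $s=1$. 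All the supporting facts check out: $v\in\mathcal{R}_u$ implies $v\in\mathcal{N}_u$ so both relative degree changes are indeed controlled by $\alpha_u$, and your extremal point $s=1-\alpha_u$ corresponds exactly to the paper's maximising corner, so the same tightness is visible (the bound is attained when both endpoints lose the maximal fraction $\alpha_u$ of their degree). What your approach buys is a one-dimensional argument in place of a two-dimensional boundary analysis, and a cleaner explanation of why the asymmetric constant $\alpha_u/(1-\alpha_u)$ rather than $\alpha_u/(1+\alpha_u)$ appears: degree decreases hurt more than increases. The concluding step for the second inequality ($d_v\geq\delta_u$ and $\abs{\mathcal{R}_u}=d_u-\Delta_u^-$) is identical to the paper's.
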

Proof: See Supplementary Material. 
The assumption on $\alpha_u$ can be interpreted as follows.
If $\alpha_u$ = 0 then the degree of $u$ and that of all nodes in the neighbourhood of $u$ are unchanged, so the third term in Eq.~(\ref{eq:Eu}) becomes zero. If $\alpha_u \geq 1$, then for some node $v$ in $\mathcal{N}_u \cup \set{u}$ we have $\abs{\Delta_v}/d_v \geq 1$. Notice that for all nodes we have $\Delta_v > -d_v$, since the degree after perturbation $\delta_v'$ is strictly positive (recall that we do not allow isolated nodes). Therefore, we must instead have $\Delta_v \geq d_v$ which implies $d_v' \geq 2d_v$. In other words, the assumption $\alpha_u < 1$ means that for all nodes $v$ in $\mathcal{N}_u \cup \set{u}$ we have $d_v' < 2d_v$. This limits large amount of change around low degree nodes. It can be noted that if a perturbation does not alter the degree distribution then $\alpha_u=0$ for all nodes and the third term in Eq.~(\ref{eq:Eu}) vanishes. We will consider a particular case of degree preserving perturbation in Section \ref{sec:rewiring}.


By combining the bounds in Eq.~(\ref{eq:del}) and Eq.~(\ref{eq:add}) with Lemma \ref{lem:alpha}, we can further bound Eq.~(\ref{eq:Eu}): 
\begin{align}
    \norm{\E_u}_1 &\leq \frac{\Delta_u^-}{\sqrt{  d_u \delta_u}} +  \frac{\Delta_u^+}{\sqrt{ d_u' \delta_u'}} +  \left(\frac{\alpha_u}{1-\alpha_u}\right)\frac{d_u-\Delta_u^-}{\sqrt{ d_u \delta_u }}. \label{eq:finalbound} 
\end{align}
By further combining this bound with Eq.~(\ref{eq:norm2leqmax}), we arrive at our main result. 
\begin{thm}
\label{thm:main}
Let $\alpha_u \in [0, 1)$ for all nodes $u \in \V$. Then the following holds:
\begin{equation}
    \norm{\E}_2 \leq \max_{u \in \V} \bigg\{ \frac{\Delta_u^-}{\sqrt{  d_u \delta_u}} +  \frac{\Delta_u^+}{\sqrt{ d_u' \delta_u'}} +  \left(\frac{\alpha_u}{1-\alpha_u}\right)\frac{d_u-\Delta_u^-}{\sqrt{ d_u \delta_u }} \bigg\}
    \label{eq:finalbound2} 
\end{equation}
\end{thm}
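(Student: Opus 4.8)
The plan is to assemble the pieces already established in this section into a single chain of inequalities. The starting point is the bound $\norm{\E}_2^2 \leq \norm{\E}_1\norm{\E}_\infty$; since $\E = \lap_p - \lap$ is a difference of symmetric matrices and hence Hermitian, its one-norm and infinity-norm coincide, collapsing this to $\norm{\E}_2 \leq \norm{\E}_1 = \max_{u \in \V}\norm{\E_u}_1$, which is exactly Eq.~(\ref{eq:norm2leqmax}). This is the conceptual heart of the reduction: it replaces control of the full spectrum of $\E$ (which would otherwise be required for the operator norm) with a purely row-local computation, at the cost of whatever slack is introduced by $\norm{\E}_2 \leq \norm{\E}_1$.

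Next I would fix an arbitrary node $u$ and expand $\norm{\E_u}_1$ in closed form. This requires a short case analysis of how each entry $\lap_{uv}$ changes under the perturbation, reading off Eq.~(\ref{eq:Laplacian}): the diagonal entry stays at $1$ and therefore cancels; an edge in $\mathcal{D}_u$ contributes $1/\sqrt{d_u d_v}$; an edge in $\mathcal{A}_u$ contributes $1/\sqrt{d_u' d_v'}$; and a surviving edge in $\mathcal{R}_u$ contributes $\abs{1/\sqrt{d_u d_v} - 1/\sqrt{d_u' d_v'}}$, since both its endpoints may have changed degree. Summing over the neighbours of $u$ yields the three-term expression of Eq.~(\ref{eq:Eu}).

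I would then bound the three terms separately. The deletion and addition sums are handled by replacing the neighbour degree $d_v$ (respectively $d_v'$) with the smallest neighbourhood degree $\delta_u$ (respectively $\delta_u'$) in the denominator, giving Eq.~(\ref{eq:del}) and Eq.~(\ref{eq:add}). For the remaining-edge term I would invoke Lemma \ref{lem:alpha}, whose hypothesis $\alpha_u \in [0,1)$ bounds the relative degree change of $u$ and all its neighbours and thereby converts each summand $\abs{1/\sqrt{d_u d_v} - 1/\sqrt{d_u' d_v'}}$ into a multiple $\alpha_u/(1-\alpha_u)$ of $1/\sqrt{d_u d_v}$. Adding the three bounds produces Eq.~(\ref{eq:finalbound}); since this holds for every $u \in \V$, taking the maximum over $\V$ and chaining with Eq.~(\ref{eq:norm2leqmax}) delivers the claimed inequality.

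The genuinely substantive step is Lemma \ref{lem:alpha} (deferred to the supplement), and I expect it to be the main obstacle. The remaining-edge term is the only place where both endpoint degrees change at once, so one cannot simply discard a contribution as in the deletion and addition cases; instead the factor $\alpha_u/(1-\alpha_u)$ has to be extracted from a careful estimate of $\abs{1 - \sqrt{d_u d_v/(d_u' d_v')}}$ under the constraint $\abs{\Delta_v}/d_v \leq \alpha_u < 1$, which is precisely where the assumption $\alpha_u < 1$ is used. Granting that lemma together with the elementary substitutions above, the theorem follows by direct combination with no further difficulty.
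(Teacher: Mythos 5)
Your proposal is correct and follows essentially the same route as the paper: the paper's proof of Theorem~\ref{thm:main} likewise just observes that the hypothesis $\alpha_u \in [0,1)$ for all $u$ makes Eq.~(\ref{eq:finalbound}) valid at every node, and then substitutes it into Eq.~(\ref{eq:norm2leqmax}), with the row expansion Eq.~(\ref{eq:Eu}), the elementary bounds Eq.~(\ref{eq:del})--(\ref{eq:add}), and Lemma~\ref{lem:alpha} supplying exactly the ingredients you describe. Your identification of Lemma~\ref{lem:alpha} as the only substantive step is also accurate.
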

Proof: See Supplementary Material.
We will explore the looseness of the bound given in Eq.~(\ref{eq:finalbound}) and Eq.~(\ref{eq:finalbound2}) in Section \ref{sec:experiments}. In practice, the bound might be loose but it provides insight into when we expect filters to be stable. We will discuss this insight in Section \ref{subsection:boundintepretation}. In the following subsection, we will consider a special case where we can produce a bound that is tighter in practice.

\subsection{Bounding the error norm under edge rewiring}
\label{sec:rewiring}
Degree preserving rewiring is a type of edge rewiring such that the perturbation does not change the original degree distribution \cite{kenlay2020stability}. Given two edges $u \sim v$ and $u' \sim v'$ such that $u \not \sim v', u \not \sim u', v \not \sim v'$ and $v \not \sim u'$, the double edge rewiring operation deletes the two edges and introduces the edges $u \sim u'$ and $v \sim v'$ (see Fig.~\ref{fig:rewiring} for an illustration). The perturbation consists of two edge deletions and two edge additions and does not change the degree of any nodes involved. This model of perturbation approximately arises in practical applications, where the capacity of a node is fixed and remains at full load such as in communication networks \cite{bienstock1994degree}. 
In this specific scenario, $\alpha_u=0, \delta_u=\delta_u'$ and $d_u=d_u'$. Furthermore, we know that $\Delta_u^- = \Delta_u^+ = r_u$ where we define $r_u$ as the number of rewiring operations involved around a node $u$. Using Theorem \ref{thm:main} we get the following corollary.
\begin{corr} If the perturbation consists of only double edge rewiring operations then:
\begin{equation}
    \norm{\E}_2 \leq \max_{u \in \V} \frac{2r_u}{\sqrt{ d_u \delta_u}}.
\end{equation}
\end{corr}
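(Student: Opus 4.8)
The plan is to obtain the corollary as a direct specialisation of Theorem \ref{thm:main}, substituting into the bound of Eq.~(\ref{eq:finalbound2}) the structural quantities that take special values under pure double edge rewiring. No new estimate is needed; the work is entirely in recording which terms collapse and why.

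First I would assemble the specialised quantities stated in the setup. A double edge rewiring deletes the edges $u \sim v$ and $u' \sim v'$ and adds $u \sim u'$ and $v \sim v'$, so each endpoint loses exactly one incident edge and gains exactly one; consequently every node degree is preserved, giving $d_u' = d_u$ and, under this model, $\delta_u' = \delta_u$. Since the degree distribution is unchanged we have $\Delta_v = 0$ for all $v$, hence $\alpha_u = \max_{v \in \mathcal{N}_u \cup \set{u}} \abs{\Delta_v}/d_v = 0$; in particular the hypothesis $\alpha_u \in [0,1)$ of Theorem \ref{thm:main} is satisfied. Finally, counting the rewiring operations incident to $u$ gives one deletion and one addition per operation, so $\Delta_u^- = \Delta_u^+ = r_u$.

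Next I would feed these values into Eq.~(\ref{eq:finalbound2}). The factor $\alpha_u/(1-\alpha_u)$ equals zero, so the third summand vanishes identically. The first summand becomes $r_u/\sqrt{d_u \delta_u}$, and the second becomes $r_u/\sqrt{d_u' \delta_u'} = r_u/\sqrt{d_u \delta_u}$ after substituting $d_u' = d_u$ and $\delta_u' = \delta_u$. Their sum is $2r_u/\sqrt{d_u \delta_u}$, and taking the maximum over $u \in \V$ yields the stated bound.

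Because the argument is pure substitution, there is no genuine obstacle; the only step deserving care is the justification of the structural simplifications, and in particular the equality $\delta_u' = \delta_u$. This is subtle precisely because a rewiring does change the neighbourhood $\mathcal{N}_u$ even though it preserves every degree, so one should confirm that the smallest neighbouring degree is unaffected (or at least not decreased, which would already suffice to preserve the inequality) before the two surviving terms may be combined into $2r_u/\sqrt{d_u\delta_u}$.
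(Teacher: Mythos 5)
Your proof is essentially identical to the paper's: the corollary is obtained by direct substitution of $\alpha_u = 0$, $d_u' = d_u$, $\delta_u' = \delta_u$ and $\Delta_u^- = \Delta_u^+ = r_u$ into the bound of Theorem~\ref{thm:main}, exactly as the paper does. The subtlety you flag about $\delta_u' = \delta_u$ is a fair one --- the paper simply asserts this equality even though rewiring can attach $u$ to a new neighbour of lower degree than any original neighbour --- but since the paper's own derivation makes the same assertion, your argument is faithful to it.
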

A similar bound has been recently derived to bound the change in feature representations of certain graph neural network architectures \cite{kenlay2020stability}.

\subsection{Bounding filter output change}
To obtain a full bound on filter output change, we combine together bounds 
that are developed in previous sections. Consider a spectral graph filter $g$ which is linearly stable with respect to the normalised Laplacian matrix. We then have the following bound for the filter output change:
\begin{align}
    \frac{\norm{g_\theta(\lap)\x - g_\theta(\lap_p)\x}_2}{\norm{\x}_2} &\leq  \norm{g_\theta(\lap) - g_\theta(\lap_p)}_2 \nonumber \\
    &\leq  C \norm{\E}_2 
     \leq C \max_{u \in \V} \bigg\{ \frac{\Delta_u^-}{\sqrt{ d_u \delta_u}} +  \frac{\Delta_u^+}{\sqrt{ d_u' \delta_u'}} +  \left(\frac{\alpha_u}{1-\alpha_u}\right)\frac{d_u-\Delta_u^-}{\sqrt{d_u \delta_u}} \bigg\}.
\label{eq:filter_dist_bound}
\end{align}
The first inequality is introduced in Eq.~(\ref{eq:filter_dist}) which relates relative output distance and filter distance. The second inequality comes from our assumption that the graph filter is linearly stable (Eq.~(\ref{eq:linearlystabledefn})) with a stability constant $C$. Finally, we can make use of  Eq.~(\ref{eq:finalbound2}) to establish the third inequality, which provides a structurally interpretable bound on the relative output distance.
We discuss interpretations of this result in the following subsection.

\subsection{Interpretation of the bound}
\label{subsection:boundintepretation}
The bounds given in this section let us reason about sufficient conditions under which a perturbation leads to small change in graph filter output. We can conclude from Eq.~(\ref{eq:norm2leqmax}) that perturbations which cause small changes to $\norm{\E_u}_1$ over all nodes $u$ guarantee small change in terms of $\norm{\E}_2$. When would $\norm{\E_u}_1$ be small for a particular node? If $\alpha_u$ is small ($\alpha_u \approx 0$), then $\alpha_u/(1-\alpha_u) \approx 0$ and $1-\alpha_u/(1-\alpha_u) \approx 1$. Therefore the right hand side of Eq.~(\ref{eq:finalbound}) becomes approximately:
\begin{equation}
\label{eq:approx}
    \norm{\E_u}_1 \approx  \frac{\Delta_u^+}{\sqrt{ d_u' \delta_u'}} + \frac{\Delta_u^-}{\sqrt{ d_u \delta_u}}. 
\end{equation}
This approximation holds, which in turn leads to a small $\norm{\E_u}_1$, if we add and delete edges only between nodes with large degrees. The approximation becomes equality in the case when the degree distribution is preserved such as in Section \ref{sec:rewiring}. When would 
$\norm{\E_u}_1$ be small for all nodes? Intuitively, this requires perturbations to be distributed across the graph, i.e., not concentrated around any one node.
Therefore, spectral graph filters are most robust when we a) add or delete edges between high degree nodes, and b) do not perturb too much around any one node. In the next section, we empirically verify the looseness of each of the bounds and the intuition it provides.

\section{Experiments}
\label{sec:experiments}
We empirically verify the looseness of the bounds derived in the previous section.  
We perform an extensive study of the looseness of these bounds by  considering a variety of experimental conditions in terms of random graph models and perturbation strategies.
Clearly, as the overall bound in Eq.~(\ref{eq:filter_dist_bound}) is obtained from a chain of inequalities, its looseness is affected by the looseness of each individual bound (Eqs.~(\ref{eq:filter_dist}), (\ref{eq:linearlystabledefn}), (\ref{eq:norm2leqmax}), (\ref{eq:finalbound}), (\ref{eq:finalbound2})). 
For completeness, the looseness of the inequalities relating the relative output distance and the filter distance (Eq.~(\ref{eq:filter_dist})) is illustrated in Fig.~\ref{fig:rod-fd}, and that of the inequality relating the filter distance and the constant times the error norm (Eq.~(\ref{eq:linearlystabledefn})) in Fig.~\ref{fig:olexperiments}.

\subsection{Experimental setup}
\label{sec:experimental-setup}
We generate synthetic graphs on $100$ nodes, using different random graph models,  and generate features on the nodes of the graph by taking a random convex combination of the first 10 eigenvectors of the normalised graph Laplacian. The latter results   in relatively smooth signals on the graph. Gaussian noise is then added to generate noisy signals at a signal-to-noise ratio of 0 dB (equal levels of signal and noise). For the sake of simplicity, we focus in this paper on a fixed low-pass filter $f(\lambda)=(1+\lambda)^{-1}$, which has been widely used for signal denoising \citep{ramakrishna2020user} and semi-supervised learning \citep{1326716} tasks. This filter has stability constant $C=1$ and thus satisfies the inequality $\norm{f(\lap)-f(\lap_p)}_2 \leq \norm{\E}_2$, due to Proposition \ref{prop:lowpassfilter}.  We note though that the experiments may be extended to any type of linearly stable graph filter. 
We compare the filtering outcome before and after perturbation to the graph topology in a signal denoising task. To this end, we are interested in bounding the relative output distance between the denoised signal before and after perturbation, for different random graph models and perturbation strategies. The magnitude of the perturbation is set at $\lfloor 10\% \cdot \abs{\mathcal{E}} \rfloor$ edge edits. Each experiment is repeated 100 times using different random seeds. We note that in some experiments the assumption of Lemma \ref{lem:alpha} is not satisfied, i.e., $\alpha_u \ge 1$, and we discuss this further in Supplementary Material.

We use a variety of random graph models including the Erd\H{o}s-R\'enyi model (ER), 
Barab\'asi–Albert model (BA), 
Watts–Strogatz model (WS), 
K-regular graphs (K-Reg), 
K-nearest neighbour graphs (K-NN), and assortative graphs (Assortative) \citep[Chapter 7]{barabasi2013network}. The random graph models include graphs with low variance (K-NN, WS and K-Reg) and high variance (ER, Assortative and BA) in degree distribution. The standard deviation of the degree distribution averaged over each graph type is given in Table \ref{tab:graphs}. Further details of the random graph models are found in Supplementary Material. 

The perturbation strategies under consideration, for a fixed budget, are as follows: 1) randomly selecting edges to delete (Delete); 2) randomly selecting edges to add (Add); 3) using half the budget to randomly add and half to randomly delete (Add/Delete); 4) using degree preserving double edge rewiring as described in Section \ref{sec:rewiring} (Rewire), which is a special case of Add/Delete (we consider a single double edge rewiring to be four edits, i.e., two deletions and two additions); 5) projected gradient descent (PGD), which is used to find adversarial examples by perturbing the graphs similarly to that described in \citet{pgd}; and 6) sequentially deleting or adding edges in a greedy manner to minimise $\norm{\E}_1$ (Robust). Further details of the perturbation strategies are described in Supplementary Material.

\subsection{How tight is the bound $||\E||_2 \leq ||\E||_1$?}
\label{sec:boundontwonorm}
We upper bound $\norm{\E}_2$ using the inequality given in Eq.~(\ref{eq:norm2leqmax}).
In order to quantify the tightness of the bound, we compare in Fig.~\ref{fig:normsa} the values of $\norm{\E}_1$ and $\norm{\E}_2$ for different perturbation strategies, by illustrating their correlation. We note that Robust leads to the smallest values of $\norm{\E}_2$ among all perturbation strategies. This is expected as 
Eq.~(\ref{eq:norm2leqmax}) tells us that small values of $\norm{\E}_1$, which are achieved with the Robust perturbation strategy, guarantee small values of $\norm{\E}_2$. As a matter of fact,  we observe experimentally that the two norms are correlated ($r=0.90$), confirming that in practice if we observe $\norm{\E}_2$ to be small then $\norm{\E}_1$ is likely to be small too. 

We show in Fig.~\ref{fig:normsb} the looseness of the bound given in Eq.~(\ref{eq:norm2leqmax}) among the different perturbation strategies and graph models\footnote{We do not display outliers for figures that appear in main text to make the rest of the data easier to visualise. The same plots with outliers included is included in Supplementary Material.}. Again, we see that the bound is tightest for Robust. It is interesting to observe that Rewire sometimes gives a tight bound for the graphs with low variance in degree distribution.


\subsection{How tight are the bounds on $||\E||_1$ and $||\E||_2$?}
\label{sec:holderexperiments}

We now turn our attention to the bound given in Section \ref{sec:interpretable}. As well as calculating the overall value of $||\E_u||_1$ we can also compute the contributions of the three terms in Eq.~(\ref{eq:Eu}). 
This allows us to evaluate the looseness of each term as well as the overall looseness of Eq.~(\ref{eq:finalbound}).  For each experiment we selected the node $u$ such that $u = \argmax \norm{\E_u}_1$ and calculated the terms and bounds for this node.  The results for each term are shown in Fig.~\ref{fig:bound_all_terms} and that for the overall looseness in Fig.~\ref{fig:bound_overall}.


As one can see from Fig.~\ref{fig:bound_all_terms}, the bounds for particular terms are tight in certain scenarios. For example, the inequality in Eq.~(\ref{eq:del}) becomes equality when $d_v=\delta_u$ for all neighbouring nodes $v$ of $u$, which is the case for 3-Reg graphs. The inequality in  Eq.~(\ref{eq:del}) and Eq.~(\ref{eq:add}) is loosest when $\delta_u \ll d_v$ for many neighbouring nodes $v$. This is likely to occur when the degree distribution has high variance, explaining why the bound for the first and second term are looser for graphs with higher variance in degree distribution. 
The bound on the third term is the loosest in practice. From Fig.~\ref{fig:bound_overall}, the overall bound is tightest for the Rewire strategy, and this is because both the third term and the bound for it are zero in this case. We see that PGD leads to a relatively tight bound as well due to the nature of the strategy based on adversarial examples.

Fig.~\ref{fig:finalbound2} shows the looseness of the bound on $||\E||_2$ in Eq.~(\ref{eq:finalbound2}). The overall pattern is similar to that in Fig.~\ref{fig:bound_overall}, where the bound is tight in some rewiring experiments. In general the bound performs poorly on BA graphs, likely due to the skewed degree distribution.

\begin{figure}
    \centering
    \includegraphics[width=0.5\columnwidth]{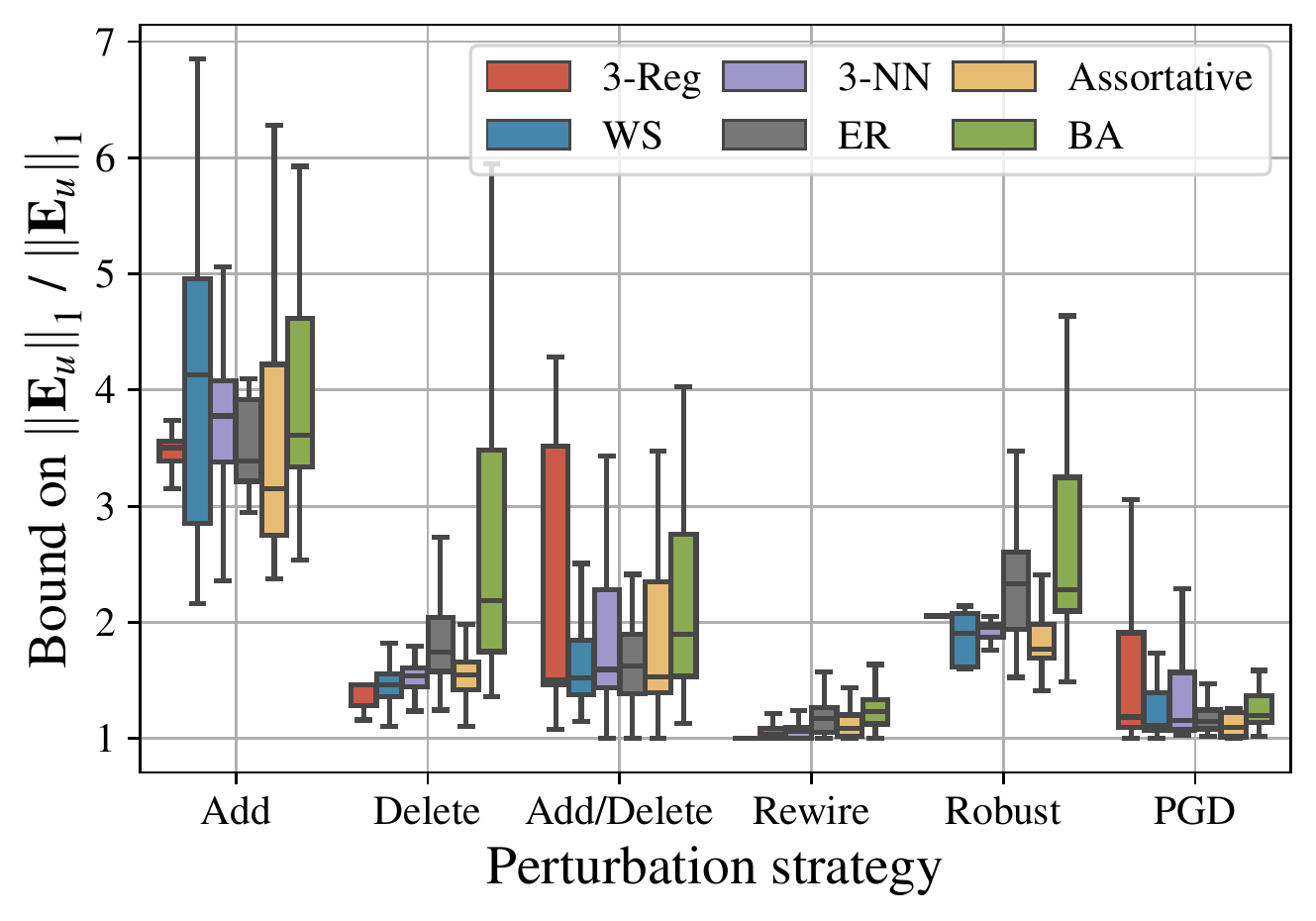}
    \vspace{-0.4cm}
    \caption{Looseness of the bound given in Eq.~(\ref{eq:finalbound}).}
    \label{fig:bound_overall}
\end{figure}

\begin{figure}
    \centering
    \includegraphics[width=0.5\columnwidth]{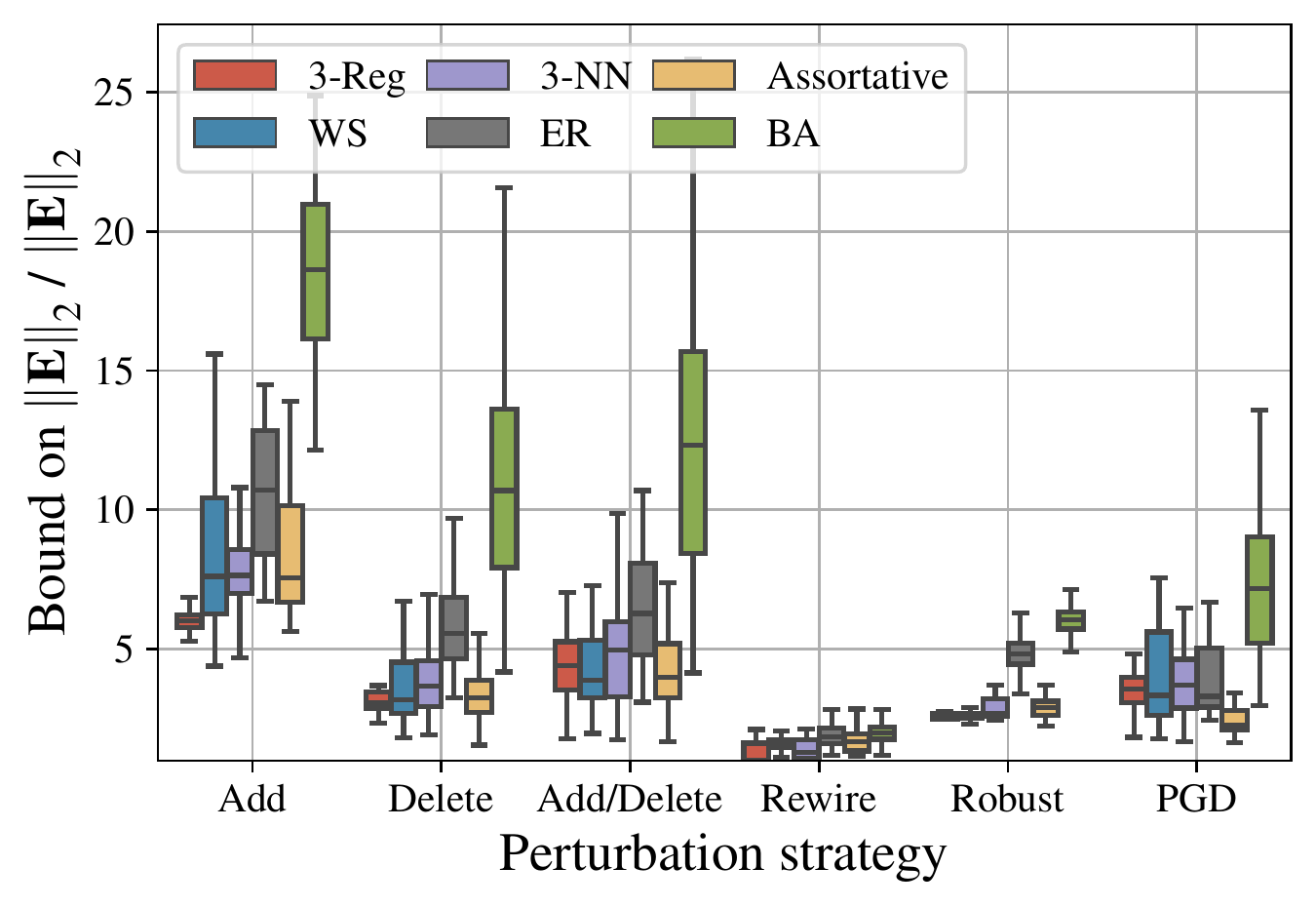}
    \vspace{-0.4cm}
    \caption{Looseness of the bound given in Eq.~(\ref{eq:finalbound2}).}
    \label{fig:finalbound2}
\end{figure}

\subsection{When are filters robust?}
In this section we take a holistic view of the bound in Eq.~(\ref{eq:filter_dist_bound}), considering how the relative output distance is affected by the perturbation strategy, graph model, and the statistics that appear throughout our chain of inequalities. We use insight from our bounds to demonstrate scenarios where a filter is robust. Fig.~\ref{fig:holistic} shows how the different graphs and strategies effect each of the quantities that appear in our bounds. Uniformly, Robust gives overall the smallest values where PGD gives the largest, which is expected due to the nature of these strategies. It is interesting to note that the Robust strategy gives smallest relative output distance for graphs with high degree variance (ER, Assortative and BA) but not for those with low degree variance (3-NN, WS, 3-Reg). When the degree distribution is flat the control and Robust strategies delete and add edges with similar endpoint degrees due to the lack of choice. On the other hand, the PGD strategy gives larger changes to graphs with higher variance in degree distribution, suggesting the existence of low degree nodes or non-uniform degree distributions that are more vulnerable to adversarial attacks.

\begin{figure}
    \centering
    \begin{subfigure}[b]{0.45\columnwidth}
        \centering
        \includegraphics[width=\textwidth]{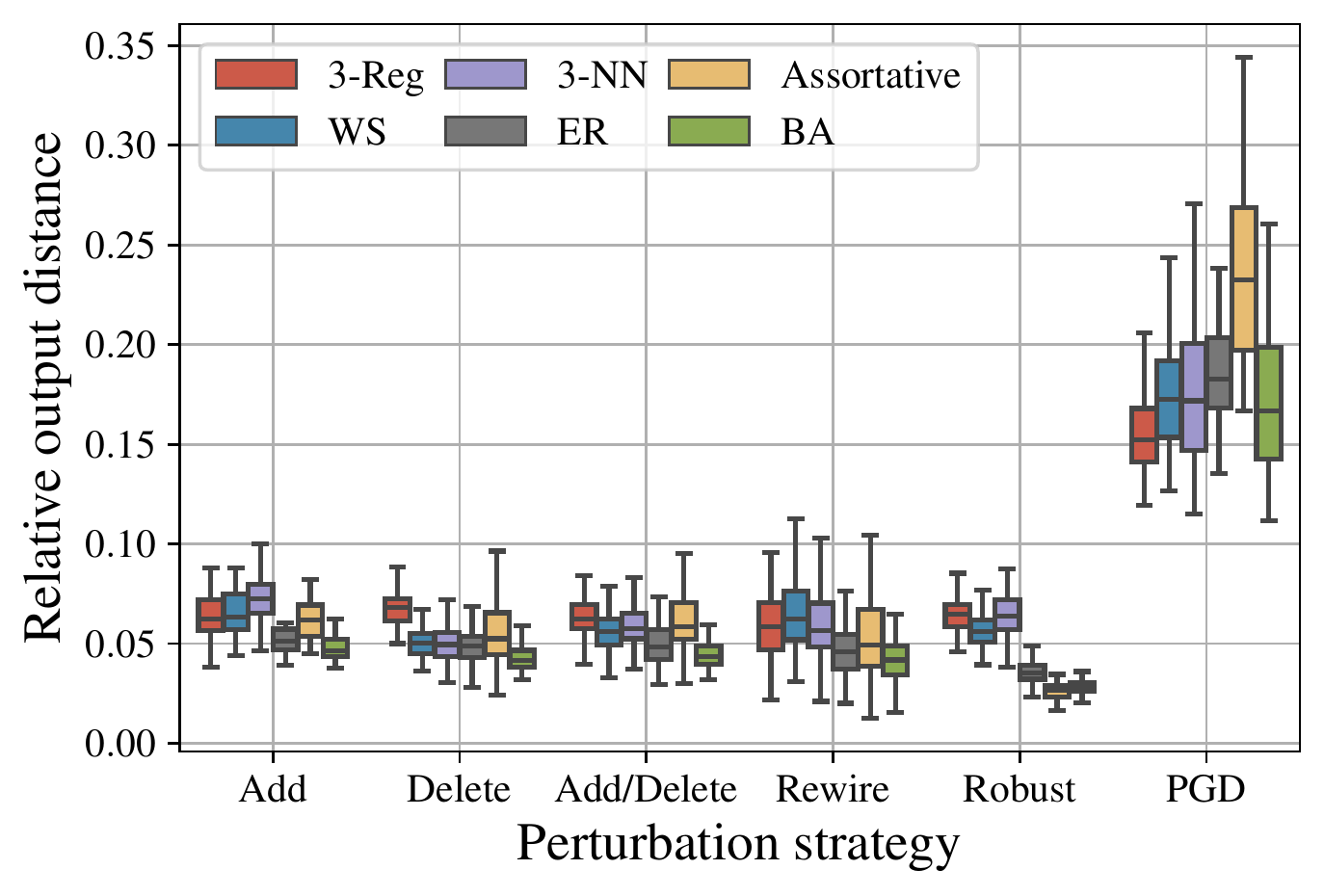}
    \end{subfigure}
    \begin{subfigure}[b]{0.45\columnwidth}
        \centering
        \includegraphics[width=\textwidth]{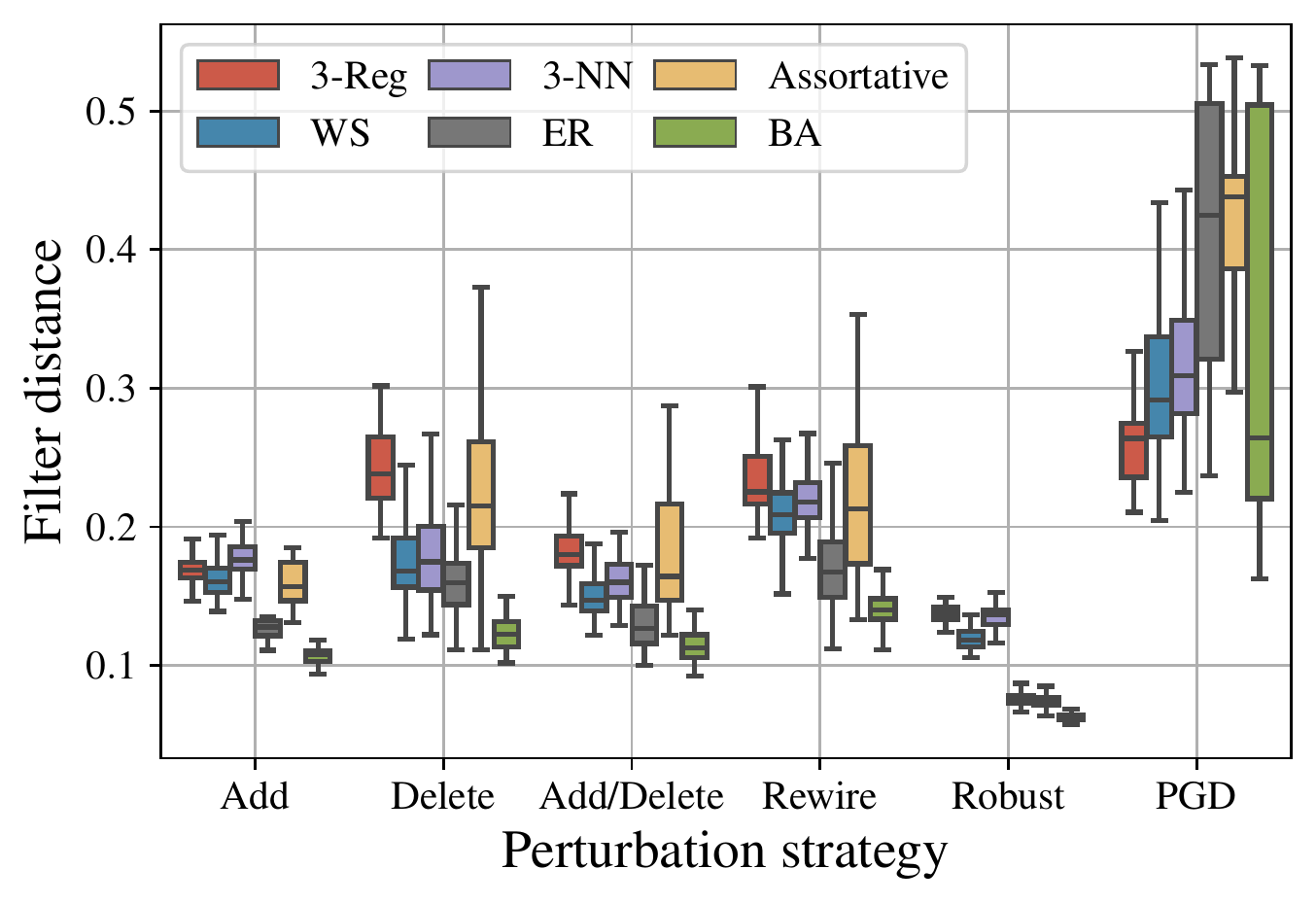}
    \end{subfigure} \\
    
    \begin{subfigure}[b]{0.45\columnwidth}
        \centering
        \includegraphics[width=\textwidth]{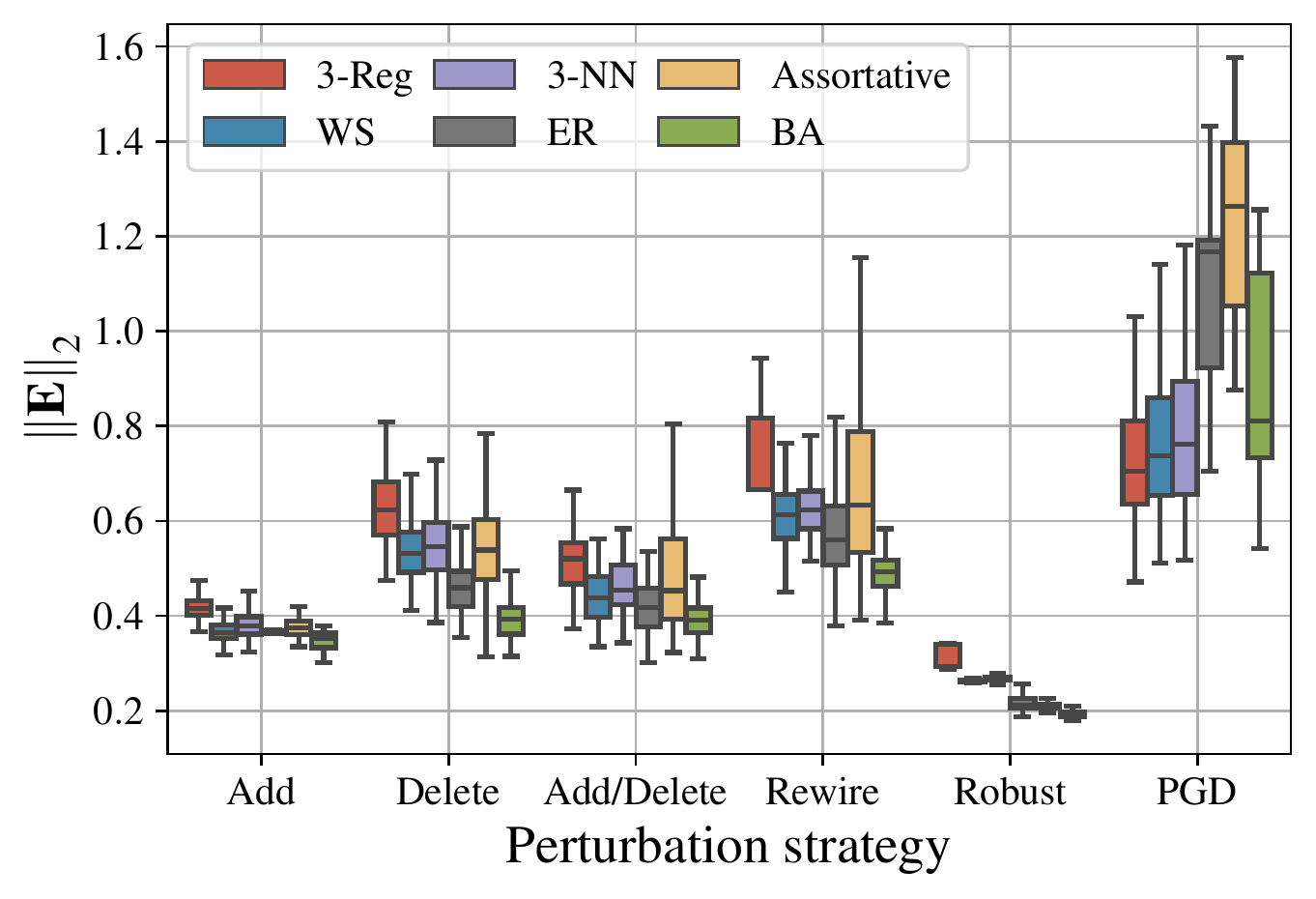}
    \end{subfigure}     
    \begin{subfigure}[b]{0.45\columnwidth}
        \centering
        \includegraphics[width=\textwidth]{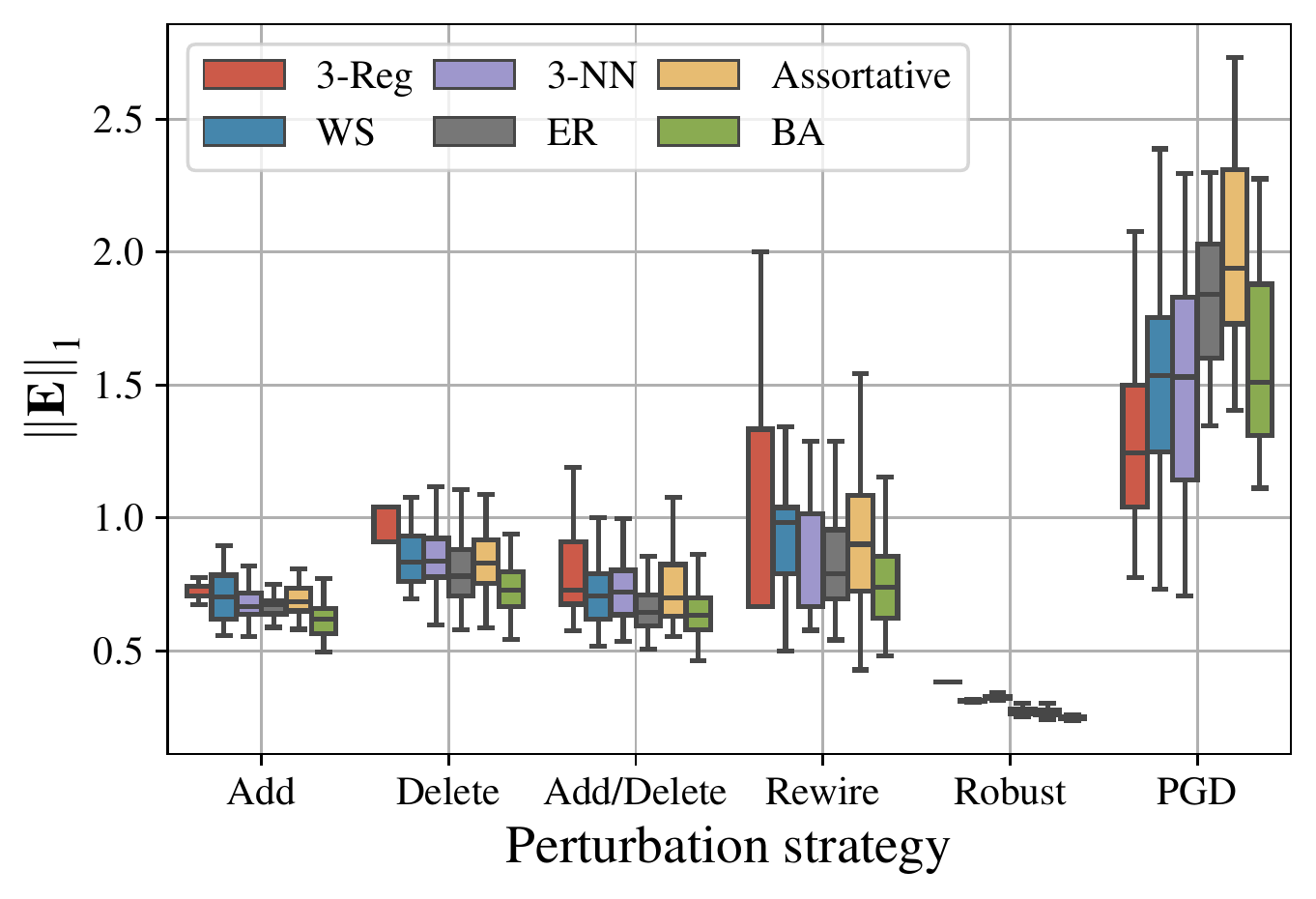}
    \end{subfigure} \\
    
    \begin{subfigure}[b]{0.45\columnwidth}
        \centering
        \includegraphics[width=\textwidth]{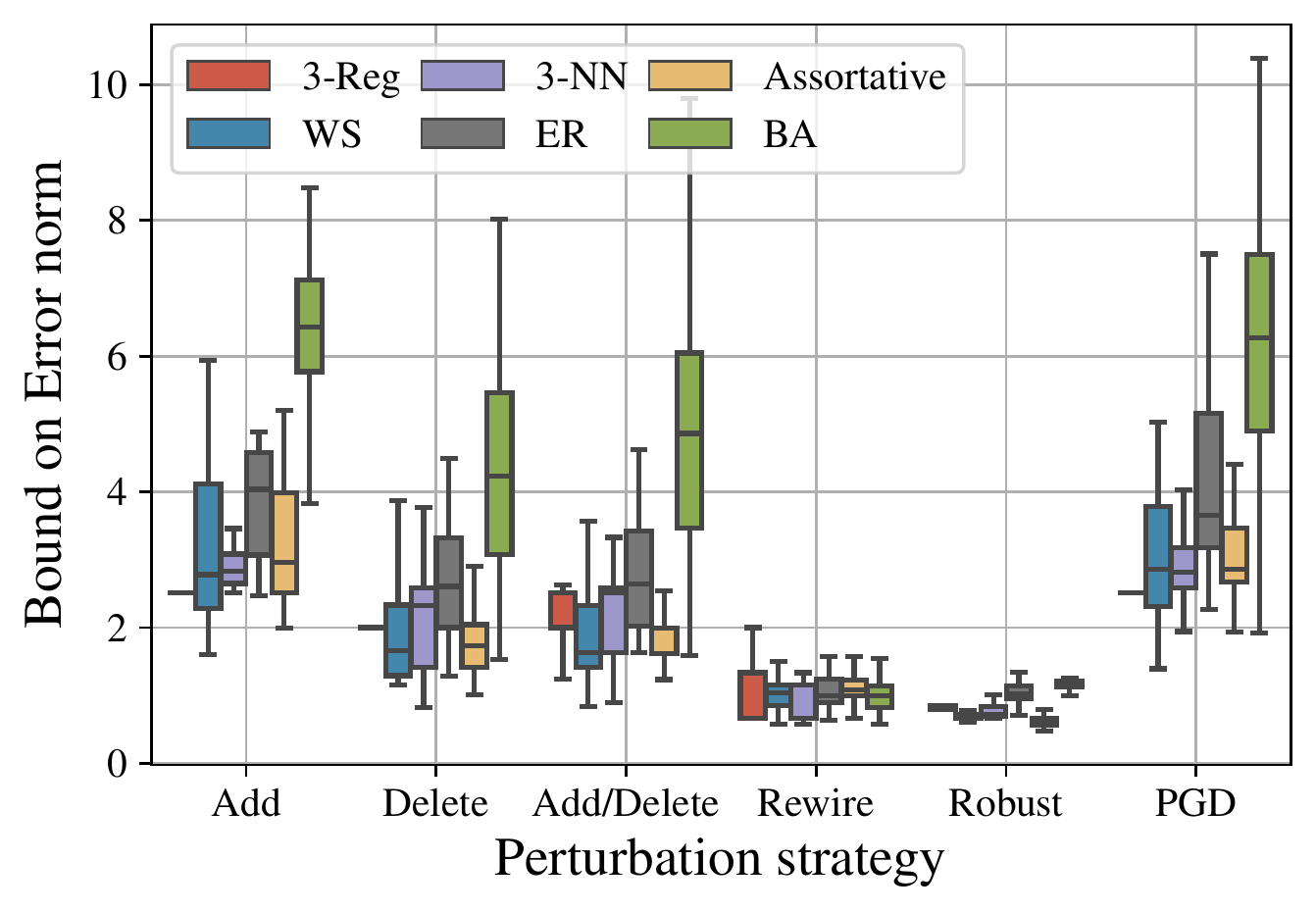}
    \end{subfigure}
    \vspace{-0.4cm}
    \caption{How different statistics vary across experimental setups.}
    \label{fig:holistic}
\end{figure}

In summary, our bounds suggest that filters are robust when we modify edges where the endpoint degrees are high and that the perturbation is distributed across the graph. BA graphs of $n$ nodes have a small diameter that grows asymptotically $\mathcal{O}(\log n / \log \log n)$ \cite{bollobas2004diameter}.
Consequentially, in our experiments on small BA graphs, most edges that are added and deleted are in close proximity. Furthermore, BA graphs have a power-law degree distributions. This type of graph model allows us to control for the distribution of the perturbation and observe instead how the endpoint degrees change across strategies. One can see how PGD tends to target small-degree nodes whereas Robust targets edges connected to the large-degree hubs (Fig.~\ref{fig:BA}). 

\begin{figure}
     \centering
     \includegraphics[width=0.8\columnwidth]{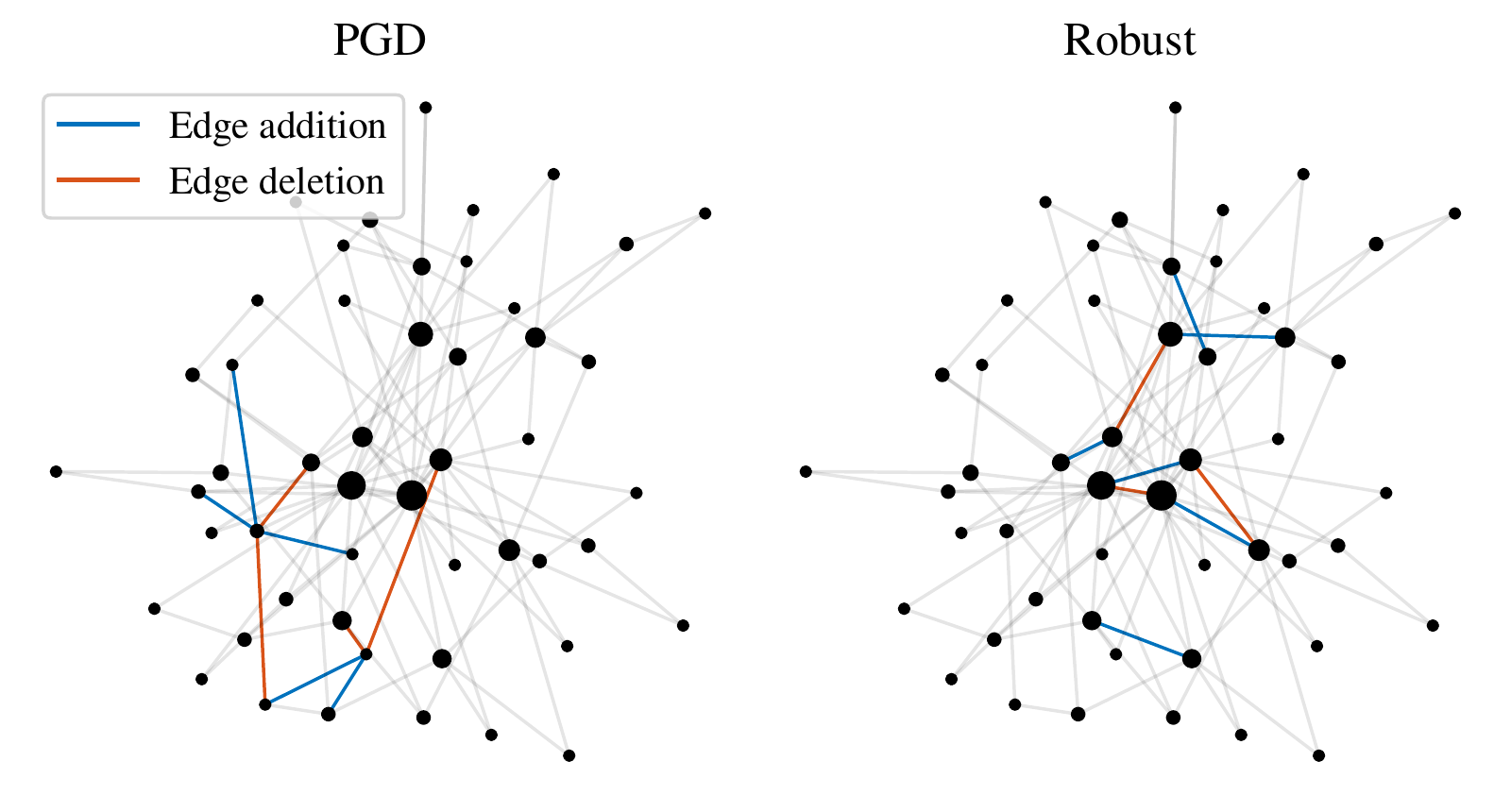}
     \vspace{-0.4cm}
     \caption{Perturbations of BA graphs ($n=50$). The original and both perturbed graphs have a diameter of $5$. The size of the node is proportional to the node degree.}
     \label{fig:BA}
\end{figure}

We finally control for the degree distribution by considering K-regular graphs. 
In Fig.~\ref{fig:kreg} we can see that Robust deletes and adds edges between nodes in a distributed manner, whereas PGD tends to add edges that are adjacent to each other. This verifies the insight from the bound in terms of robustness with respect to spatial distribution of perturbation.

\begin{figure}
     \centering
     \includegraphics[width=0.8\columnwidth]{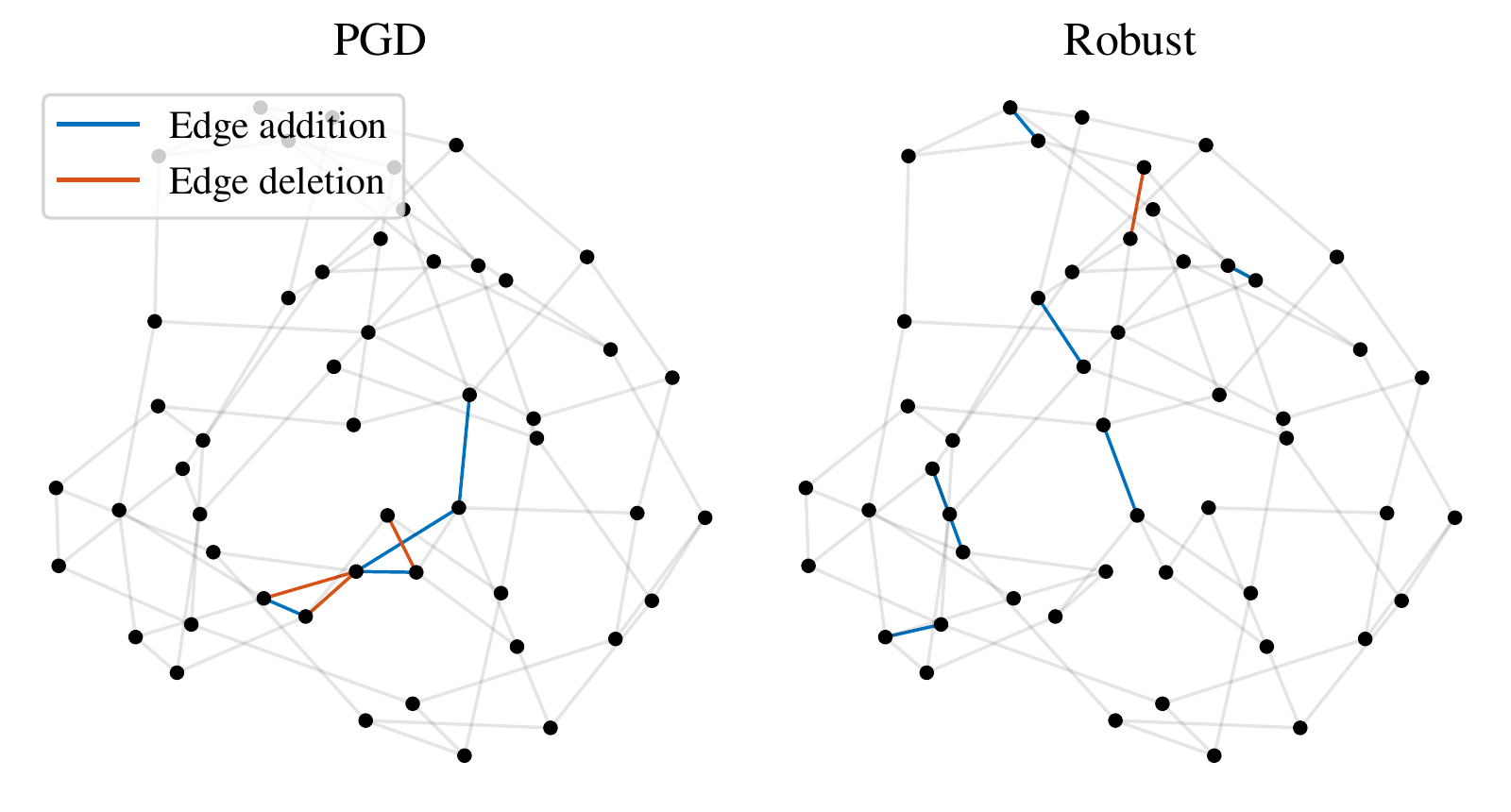}
     \vspace{-0.4cm}
     \caption{Perturbations of 3-regular graphs ($n=50$).}
     \label{fig:kreg}
\end{figure}

\section{Discussion}
\label{sec:conclusion}
In this work, we develop novel interpretable bounds to help elucidate certain types of perturbations against which spectral graph filters are robust. We show that filters are robust when we modify edges where the endpoint degrees are high, and the perturbation is distributed across the graph.  Although these bounds are likely to be loose in practice, they provide qualitative insight which we validate through extensive experiments.

We believe that our work can be used in future research to investigate further the stability of graph-based machine learning algorithms.
Studying additional perturbation models beyond edge deletion/addition, as well as relaxing the assumption on $\alpha_u$, allowing nodes to become isolated, and considering perturbation to node features, may all increase the applicability of the framework to practical scenarios. Considering weighted graphs is another natural extension of the proposed bounds. Our study is limited to a single fixed filter operating on a particular class of graph shift operator. Extension to  a wide range of graph-based machine learning models that might contain multiple spectral graph filters as building blocks is a clear avenue for future research. One such example are graph neural networks, where understanding the stability with respect to perturbations might have positive implications in designing more robust architectures. 

\bibliography{bibliography}

\begin{thebibliography}{8}
\providecommand{\natexlab}[1]{#1}
\providecommand{\url}[1]{\texttt{#1}}
\expandafter\ifx\csname urlstyle\endcsname\relax
  \providecommand{\doi}[1]{doi: #1}\else
  \providecommand{\doi}{doi: \begingroup \urlstyle{rm}\Url}\fi

\bibitem[Author(2021)]{anonymous}
Author, N.~N.
\newblock Suppressed for anonymity, 2021.

\bibitem[Duda et~al.(2000)Duda, Hart, and Stork]{DudaHart2nd}
Duda, R.~O., Hart, P.~E., and Stork, D.~G.
\newblock \emph{Pattern Classification}.
\newblock John Wiley and Sons, 2nd edition, 2000.

\bibitem[Kearns(1989)]{kearns89}
Kearns, M.~J.
\newblock \emph{Computational Complexity of Machine Learning}.
\newblock PhD thesis, Department of Computer Science, Harvard University, 1989.

\bibitem[Langley(2000)]{langley00}
Langley, P.
\newblock Crafting papers on machine learning.
\newblock In Langley, P. (ed.), \emph{Proceedings of the 17th International
  Conference on Machine Learning (ICML 2000)}, pp.\  1207--1216, Stanford, CA,
  2000. Morgan Kaufmann.

\bibitem[Michalski et~al.(1983)Michalski, Carbonell, and
  Mitchell]{MachineLearningI}
Michalski, R.~S., Carbonell, J.~G., and Mitchell, T.~M. (eds.).
\newblock \emph{Machine Learning: An Artificial Intelligence Approach, Vol. I}.
\newblock Tioga, Palo Alto, CA, 1983.

\bibitem[Mitchell(1980)]{mitchell80}
Mitchell, T.~M.
\newblock The need for biases in learning generalizations.
\newblock Technical report, Computer Science Department, Rutgers University,
  New Brunswick, MA, 1980.

\bibitem[Newell \& Rosenbloom(1981)Newell and Rosenbloom]{Newell81}
Newell, A. and Rosenbloom, P.~S.
\newblock Mechanisms of skill acquisition and the law of practice.
\newblock In Anderson, J.~R. (ed.), \emph{Cognitive Skills and Their
  Acquisition}, chapter~1, pp.\  1--51. Lawrence Erlbaum Associates, Inc.,
  Hillsdale, NJ, 1981.

\bibitem[Samuel(1959)]{Samuel59}
Samuel, A.~L.
\newblock Some studies in machine learning using the game of checkers.
\newblock \emph{IBM Journal of Research and Development}, 3\penalty0
  (3):\penalty0 211--229, 1959.

\end{thebibliography}


\begin{thebibliography}{34}
\providecommand{\natexlab}[1]{#1}
\providecommand{\url}[1]{\texttt{#1}}
\expandafter\ifx\csname urlstyle\endcsname\relax
  \providecommand{\doi}[1]{doi: #1}\else
  \providecommand{\doi}{doi: \begingroup \urlstyle{rm}\Url}\fi

\bibitem[Albert \& Barab{\'a}si(2002)Albert and
  Barab{\'a}si]{albert2002statistical}
Albert, R. and Barab{\'a}si, A.-L.
\newblock Statistical mechanics of complex networks.
\newblock \emph{Reviews of modern physics}, 74\penalty0 (1):\penalty0 47, 2002.

\bibitem[Babai(2016)]{babai2016graph}
Babai, L.
\newblock Graph isomorphism in quasipolynomial time.
\newblock In \emph{Proceedings of the forty-eighth annual ACM symposium on
  Theory of Computing}, pp.\  684--697, 2016.

\bibitem[Balcilar et~al.(2020)Balcilar, Renton, H{\'e}roux, Gauzere, Adam, and
  Honeine]{balcilar2020bridging}
Balcilar, M., Renton, G., H{\'e}roux, P., Gauzere, B., Adam, S., and Honeine,
  P.
\newblock Bridging the gap between spectral and spatial domains in graph neural
  networks.
\newblock \emph{arXiv preprint arXiv:2003.11702}, 2020.

\bibitem[Barab{\'a}si(2013)]{barabasi2013network}
Barab{\'a}si, A.-L.
\newblock Network science.
\newblock \emph{Philosophical Transactions of the Royal Society A:
  Mathematical, Physical and Engineering Sciences}, 371\penalty0
  (1987):\penalty0 20120375, 2013.

\bibitem[{Belkin} et~al.(2004){Belkin}, {Matveeva}, and {Niyogi}]{1326716}
{Belkin}, M., {Matveeva}, I., and {Niyogi}, P.
\newblock Tikhonov regularization and semi-supervised learning on large graphs.
\newblock In \emph{IEEE International Conference on Acoustics, Speech, and
  Signal Processing}, 2004.

\bibitem[Bienstock \& G{\"u}nl{\"u}k(1994)Bienstock and
  G{\"u}nl{\"u}k]{bienstock1994degree}
Bienstock, D. and G{\"u}nl{\"u}k, O.
\newblock A degree sequence problem related to network design.
\newblock \emph{Networks}, 24\penalty0 (4):\penalty0 195--205, 1994.

\bibitem[Bojchevski \& G\"{u}nnemann(2019)Bojchevski and
  G\"{u}nnemann]{bojchevski2019certifiable}
Bojchevski, A. and G\"{u}nnemann, S.
\newblock Certifiable robustness to graph perturbations.
\newblock In \emph{Advances in Neural Information Processing Systems}, pp.\
  8319--8330, 2019.

\bibitem[Bollob{\'a}s \& Riordan(2004)Bollob{\'a}s and
  Riordan]{bollobas2004diameter}
Bollob{\'a}s, B. and Riordan, O.
\newblock The diameter of a scale-free random graph.
\newblock \emph{Combinatorica}, 24\penalty0 (1):\penalty0 5--34, 2004.

\bibitem[Bronstein et~al.(2017)Bronstein, Bruna, LeCun, Szlam, and
  Vandergheynst]{bronstein2017geometric}
Bronstein, M.~M., Bruna, J., LeCun, Y., Szlam, A., and Vandergheynst, P.
\newblock Geometric deep learning: Going beyond euclidean data.
\newblock \emph{IEEE Signal Processing Magazine}, 34\penalty0 (4):\penalty0
  18--42, 2017.

\bibitem[Bruna et~al.(2014)Bruna, Zaremba, Szlam, and LeCun]{bruna14spectral}
Bruna, J., Zaremba, W., Szlam, A., and LeCun, Y.
\newblock {Spectral networks and deep locally connected networks on graphs}.
\newblock In \emph{International Conference on Learning Representations}, 2014.

\bibitem[Chami et~al.(2020)Chami, Abu-El-Haija, Perozzi, R\'{e}, and
  Murphy]{chami2020machine}
Chami, I., Abu-El-Haija, S., Perozzi, B., R\'{e}, C., and Murphy, K.
\newblock Machine learning on graphs: A model and comprehensive taxonomy.
\newblock \emph{arXiv preprint arXiv:2005.03675}, 2020.

\bibitem[Chung(1997)]{chung1997spectral}
Chung, F.
\newblock \emph{Spectral graph theory}.
\newblock American Mathematical Society, 1997.

\bibitem[Deadman \& Relton(2016)Deadman and Relton]{deadman2016taylor}
Deadman, E. and Relton, S.~D.
\newblock Taylor's theorem for matrix functions with applications to condition
  number estimation.
\newblock \emph{Linear Algebra and its Applications}, 504:\penalty0 354--371,
  2016.

\bibitem[Defferrard et~al.(2016)Defferrard, Bresson, and
  Vandergheynst]{chebynet}
Defferrard, M., Bresson, X., and Vandergheynst, P.
\newblock Convolutional neural networks on graphs with fast localized spectral
  filtering.
\newblock In \emph{Advances in Neural Information Processing Systems}, pp.\
  3844–3852, 2016.

\bibitem[{Dong} et~al.(2020){Dong}, {Thanou}, {Toni}, {Bronstein}, and
  {Frossard}]{dong2020graph}
{Dong}, X., {Thanou}, D., {Toni}, L., {Bronstein}, M., and {Frossard}, P.
\newblock Graph signal processing for machine learning: A review and new
  perspectives.
\newblock \emph{IEEE Signal Processing Magazine}, 37\penalty0 (6):\penalty0
  117--127, 2020.

\bibitem[Gama et~al.(2020)Gama, Bruna, and Ribeiro]{gama2020stability}
Gama, F., Bruna, J., and Ribeiro, A.
\newblock Stability properties of graph neural networks.
\newblock \emph{IEEE Transactions on Signal Processing}, 68:\penalty0
  5680--5695, 2020.

\bibitem[Gilbert(1959)]{gilbert1959random}
Gilbert, E.~N.
\newblock Random graphs.
\newblock \emph{The Annals of Mathematical Statistics}, 30\penalty0
  (4):\penalty0 1141--1144, 1959.

\bibitem[Higham(2002)]{higham2002accuracy}
Higham, N.~J.
\newblock \emph{Accuracy and stability of numerical algorithms}.
\newblock Society for Industrial and Applied Mathematics, 2002.

\bibitem[Kenlay et~al.(2020{\natexlab{a}})Kenlay, Thanou, and
  Dong]{kenlay2020stability}
Kenlay, H., Thanou, D., and Dong, X.
\newblock On the stability of graph convolutional neural networks under edge
  rewiring.
\newblock \emph{arXiv preprint arXiv:2010.13747}, 2020{\natexlab{a}}.

\bibitem[Kenlay et~al.(2020{\natexlab{b}})Kenlay, Thanou, and
  Dong]{polynomialstability}
Kenlay, H., Thanou, D., and Dong, X.
\newblock On the stability of polynomial spectral graph filters.
\newblock \emph{IEEE International Conference on Acoustics, Speech and Signal
  Processing}, pp.\  5350--5354, 2020{\natexlab{b}}.

\bibitem[Kipf \& Welling(2017)Kipf and Welling]{kipf17semi}
Kipf, T.~N. and Welling, M.
\newblock {Semi-supervised classification with graph convolutional networks}.
\newblock In \emph{International Conference on Learning Representations}, 2017.

\bibitem[Levie et~al.(2019)Levie, Isufi, and
  Kutyniok]{levie2019transferability}
Levie, R., Isufi, E., and Kutyniok, G.
\newblock On the transferability of spectral graph filters.
\newblock \emph{arXiv preprint arXiv:1901.10524}, 2019.

\bibitem[{Levie} et~al.(2019){Levie}, {Monti}, {Bresson}, and
  {Bronstein}]{cayleynet}
{Levie}, R., {Monti}, F., {Bresson}, X., and {Bronstein}, M.~M.
\newblock Cayleynets: Graph convolutional neural networks with complex rational
  spectral filters.
\newblock \emph{IEEE Transactions on Signal Processing}, 67\penalty0
  (1):\penalty0 97--109, 2019.

\bibitem[Ortega et~al.(2018)Ortega, Frossard, Kova{\v{c}}evi{\'c}, Moura, and
  Vandergheynst]{ortega2018graph}
Ortega, A., Frossard, P., Kova{\v{c}}evi{\'c}, J., Moura, J.~M., and
  Vandergheynst, P.
\newblock Graph signal processing: Overview, challenges, and applications.
\newblock \emph{Proceedings of the IEEE}, 106\penalty0 (5):\penalty0 808--828,
  2018.

\bibitem[Ramakrishna et~al.(2020)Ramakrishna, Wai, and
  Scaglione]{ramakrishna2020user}
Ramakrishna, R., Wai, H.~T., and Scaglione, A.
\newblock A user guide to low-pass graph signal processing and its
  applications: Tools and applications.
\newblock \emph{IEEE Signal Processing Magazine}, 37\penalty0 (6):\penalty0
  74--85, 2020.

\bibitem[Rossi et~al.(2020)Rossi, Frasca, Chamberlain, Eynard, Bronstein, and
  Monti]{rossi2020sign}
Rossi, E., Frasca, F., Chamberlain, B., Eynard, D., Bronstein, M., and Monti,
  F.
\newblock Sign: Scalable inception graph neural networks.
\newblock \emph{arXiv preprint arXiv:2004.11198}, 2020.

\bibitem[{Shuman} et~al.(2013){Shuman}, {Narang}, {Frossard}, {Ortega}, and
  {Vandergheynst}]{shuman2013emerging}
{Shuman}, D.~I., {Narang}, S.~K., {Frossard}, P., {Ortega}, A., and
  {Vandergheynst}, P.
\newblock The emerging field of signal processing on graphs: Extending
  high-dimensional data analysis to networks and other irregular domains.
\newblock \emph{IEEE Signal Processing Magazine}, 30\penalty0 (3):\penalty0
  83--98, 2013.

\bibitem[Steger \& Wormald(1999)Steger and Wormald]{steger1999generating}
Steger, A. and Wormald, N.~C.
\newblock Generating random regular graphs quickly.
\newblock \emph{Combinatorics, Probability and Computing}, 8\penalty0
  (04):\penalty0 377--396, 1999.

\bibitem[Sun et~al.(2018)Sun, Dou, Yang, Wang, Yu, and Li]{sun2018adversarial}
Sun, L., Dou, Y., Yang, C., Wang, J., Yu, P.~S., and Li, B.
\newblock Adversarial attack and defense on graph data: A survey.
\newblock \emph{arXiv preprint arXiv:1812.10528}, 2018.

\bibitem[Watts \& Strogatz(1998)Watts and Strogatz]{watts1998collective}
Watts, D.~J. and Strogatz, S.~H.
\newblock Collective dynamics of ‘small-world’ networks.
\newblock \emph{Nature}, 393\penalty0 (6684):\penalty0 440--442, 1998.

\bibitem[Wu et~al.(2019)Wu, Souza, Zhang, Fifty, Yu, and
  Weinberger]{wu2019simplifying}
Wu, F., Souza, A., Zhang, T., Fifty, C., Yu, T., and Weinberger, K.
\newblock Simplifying graph convolutional networks.
\newblock In \emph{International Conference on Machine Learning}, pp.\
  6861--6871, 2019.

\bibitem[Xu et~al.(2019)Xu, Chen, Liu, Chen, Weng, Hong, and Lin]{pgd}
Xu, K., Chen, H., Liu, S., Chen, P., Weng, T., Hong, M., and Lin, X.
\newblock Topology attack and defense for graph neural networks: An
  optimization perspective.
\newblock In \emph{International Joint Conference on Artificial Intelligence},
  pp.\  3961--3967, 2019.

\bibitem[Xulvi-Brunet \& Sokolov(2004)Xulvi-Brunet and
  Sokolov]{xulvi2004reshuffling}
Xulvi-Brunet, R. and Sokolov, I.~M.
\newblock Reshuffling scale-free networks: From random to assortative.
\newblock \emph{Physical Review E}, 70\penalty0 (6):\penalty0 066102, 2004.

\bibitem[Z\"{u}gner et~al.(2018)Z\"{u}gner, Akbarnejad, and
  G\"{u}nnemann]{zugner2018adversarial}
Z\"{u}gner, D., Akbarnejad, A., and G\"{u}nnemann, S.
\newblock Adversarial attacks on neural networks for graph data.
\newblock In \emph{ACM SIGKDD International Conference on Knowledge Discovery
  \& Data Mining}, pp.\  2847–2856, 2018.

\end{thebibliography}
\bibliographystyle{icml2021}

\clearpage
\beginsupplement
\appendix
\section{Proofs}
\label{appendix:proofs}
\setcounter{thm}{0}
\setcounter{prop}{0}
\setcounter{lemm}{0}

\begin{prop}
Polynomial filters $g(\lambda)=\sum_{k=0}^K \theta_k \lambda^k$ are linearly stable with respect to any GSO where the spectrum lies in $[-1, 1]$. The stability constant is given by $C=\sum_{k=1}^K k \abs{\theta_k}.$
\end{prop}
\begin{proof}
The proof technique is the same as in \citet{kenlay2020stability}. Note that $\norm{\lap}_2 \leq 1$ and  $\norm{\lap_p}_2 \leq 1$ so by \citet[Lemma 3]{levie2019transferability} we know that $\norm{\lap^k - \lap_p^k} \leq k \norm{\E}_2$. Using this and the the triangle inequality we have:
\begin{equation*}
    \norm{g(\lap) - g(\lap_p)}_2 = \norm{\sum\limits_{k=1}^K \theta_k (\lap^k - \lap_p^k)}_2 \leq \sum\limits_{k=1}^K \abs{\theta_k} \norm{\lap^k - \lap_p^k}_2 \leq \sum\limits_{k=1}^K k \abs{\theta_k} \norm{\E}_2. \qedhere
\end{equation*}
\end{proof}

\begin{prop}
The low-pass filter $g(\lambda)=(1+\alpha \lambda)^{-1}$ is linearly stable with respect to the normalised Laplacian matrix. The constant is given by $C=\alpha$. 
\end{prop}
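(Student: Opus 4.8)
The plan is to use the resolvent identity together with the positive semidefiniteness of the normalised Laplacian. Writing $g(\lap) = (\mathbf{I}_n + \alpha \lap)^{-1}$ and $g(\lap_p) = (\mathbf{I}_n + \alpha \lap_p)^{-1}$, I first observe that both $\mathbf{I}_n + \alpha \lap$ and $\mathbf{I}_n + \alpha \lap_p$ are invertible: since the spectrum of the normalised Laplacian lies in $[0,2]$ and $\alpha > 0$, the eigenvalues of $\mathbf{I}_n + \alpha \lap$ are $1 + \alpha \lambda_i \geq 1$, and likewise for $\lap_p$.

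Next I would apply the resolvent identity $A^{-1} - B^{-1} = A^{-1}(B - A)B^{-1}$ with $A = \mathbf{I}_n + \alpha \lap$ and $B = \mathbf{I}_n + \alpha \lap_p$. Since $B - A = \alpha(\lap_p - \lap) = \alpha \E$, this gives
\[
g(\lap) - g(\lap_p) = \alpha\,(\mathbf{I}_n + \alpha \lap)^{-1}\, \E \, (\mathbf{I}_n + \alpha \lap_p)^{-1}.
\]
Taking operator norms and using submultiplicativity yields
\[
\norm{g(\lap) - g(\lap_p)}_2 \leq \alpha \, \norm{(\mathbf{I}_n + \alpha \lap)^{-1}}_2 \, \norm{\E}_2 \, \norm{(\mathbf{I}_n + \alpha \lap_p)^{-1}}_2.
\]

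It then remains to bound each resolvent norm by $1$. Because $\lap$ is symmetric with eigenvalues $\lambda_i \geq 0$, the matrix $\mathbf{I}_n + \alpha \lap$ has eigenvalues $1 + \alpha \lambda_i \geq 1$, so $\norm{(\mathbf{I}_n + \alpha \lap)^{-1}}_2 = \big(\min_i (1 + \alpha \lambda_i)\big)^{-1} \leq 1$, and identically for $\lap_p$. Substituting both bounds gives $\norm{g(\lap) - g(\lap_p)}_2 \leq \alpha \norm{\E}_2$, which is exactly linear stability with $C = \alpha$. There is no substantial obstacle here: the only points requiring care are verifying that the two resolvents are well-defined and that their operator norms do not exceed $1$, both of which follow immediately from the positive semidefiniteness of the normalised Laplacian and $\alpha > 0$.
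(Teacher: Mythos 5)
Your proof is correct and follows essentially the same route as the paper's: both apply the resolvent identity $\mathbf{X}^{-1}-\mathbf{Y}^{-1}=\mathbf{X}^{-1}(\mathbf{Y}-\mathbf{X})\mathbf{Y}^{-1}$ with $\mathbf{X}=\mathbf{I}_n+\alpha\lap$, $\mathbf{Y}=\mathbf{I}_n+\alpha\lap_p$, then use the spectrum of the normalised Laplacian to bound each resolvent's operator norm by $1$. No gaps.
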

\begin{proof}
Let $\mathbf{X}=\mathbf{I}_n + \alpha \lap$ and $\mathbf{Y}=\mathbf{I}_n + \alpha \lap_p$ where $\lap, \lap_p$ are normalised Laplacian matrices. Then we have:
\begin{equation}
    \norm{f(\lap)-f(\lap_p)}_2 = \norm{\mathbf{X}^{-1} - \mathbf{Y}^{-1}}_2 
    = \norm{\mathbf{X}^{-1} (\mathbf{Y}-\mathbf{X}) \mathbf{Y}^{-1}}_2 
    \leq \norm{\mathbf{X}^{-1}}_2 \norm{\mathbf{Y}^{-1}}_2 \norm{\mathbf{X}-\mathbf{Y}}_2.
\end{equation}
Note that $\mathbf{X}$ has eigenvalues in the interval $[1,1+2\alpha]$, so $\mathbf{X}^{-1}$ has eigenvalues in the interval $[(1+2\alpha)^{-1}, 1]$. This holds similarly for $\mathbf{Y}$. The matrices $\mathbf{X}^{-1}$ and $\mathbf{Y}^{-1}$ are symmetric and positive definite so their operator norm is the largest eigenvalue $\norm{\mathbf{X}^{-1}}_2 = \norm{\mathbf{Y}^{-1}}_2 = 1$. Thus,
\begin{equation*}
   \norm{f(\lap)-f(\lap_p)}_2 \leq  \norm{\mathbf{X}^{-1}}_2 \norm{\mathbf{Y}^{-1}}_2 \norm{\mathbf{X}-\mathbf{Y}}_2 
   \leq \norm{\mathbf{X}-\mathbf{Y}}_2 
   = \alpha \norm{\lap-\lap_p}_2. \qedhere
\end{equation*} 
\end{proof}

\begin{lemm}
Let $\alpha_u \in [0, 1)$. Then the following holds:
\begin{align}
\sum_{v \in \mathcal{R}_u} \abs{\frac{1}{\sqrt{d_ud_v}} - \frac{1}{\sqrt{d_u'd_v'}}} &\leq \sum_{v \in \mathcal{R}_u} \left(\frac{\alpha_u}{1-\alpha_u}\right)\frac{1}{\sqrt{d_u d_v}} \label{lemm:firstinequality} \\ 
&\leq  \left(\frac{\alpha_u}{1-\alpha_u}\right)\frac{d_u-\Delta_u^-}{\sqrt{ d_u \delta_u}}. \label{lemm:secondinequality}
\end{align}
\end{lemm}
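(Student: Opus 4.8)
The plan is to prove the two inequalities \eqref{lemm:firstinequality} and \eqref{lemm:secondinequality} separately, since they rest on quite different ideas. For \eqref{lemm:firstinequality} I would argue term by term: it suffices to show that for each fixed $v \in \mathcal{R}_u$,
\begin{equation*}
\abs{\frac{1}{\sqrt{d_u d_v}} - \frac{1}{\sqrt{d_u' d_v'}}} \leq \left(\frac{\alpha_u}{1-\alpha_u}\right)\frac{1}{\sqrt{d_u d_v}},
\end{equation*}
and then sum over $v \in \mathcal{R}_u$. Factoring $1/\sqrt{d_u d_v}$ out of the left-hand side, this reduces to the scalar estimate $\abs{1 - \sqrt{d_u d_v/(d_u' d_v')}} \leq \alpha_u/(1-\alpha_u)$.

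To establish that scalar estimate, I would write each perturbed degree as $d_w' = d_w(1 + \Delta_w/d_w)$ for $w \in \set{u,v}$. Because $v \in \mathcal{R}_u \subseteq \mathcal{N}_u$ and $u \in \mathcal{N}_u \cup \set{u}$, the definition of $\alpha_u$ gives $\abs{\Delta_w}/d_w \leq \alpha_u$ for both $w=u$ and $w=v$; writing $\beta_w = \Delta_w/d_w \in [-\alpha_u, \alpha_u]$ we get $d_u d_v/(d_u' d_v') = 1/\big((1+\beta_u)(1+\beta_v)\big)$. Since $\alpha_u < 1$ each factor $1+\beta_w$ is positive and lies in $[1-\alpha_u, 1+\alpha_u]$, so their product lies in $[(1-\alpha_u)^2, (1+\alpha_u)^2]$ and hence $\sqrt{d_u d_v/(d_u' d_v')} \in [1/(1+\alpha_u), 1/(1-\alpha_u)]$. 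Consequently $1 - \sqrt{d_u d_v/(d_u' d_v')}$ lies in the interval $[-\alpha_u/(1-\alpha_u), \alpha_u/(1+\alpha_u)]$, whose absolute value is at most $\max\set{\alpha_u/(1-\alpha_u), \alpha_u/(1+\alpha_u)} = \alpha_u/(1-\alpha_u)$, as required.

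For \eqref{lemm:secondinequality} I would factor the constant $\alpha_u/(1-\alpha_u)$ and $1/\sqrt{d_u}$ out of the sum and bound $\sum_{v \in \mathcal{R}_u} 1/\sqrt{d_v}$. Since $\delta_u = \min_{v \in \mathcal{N}_u} d_v \leq d_v$ for every $v \in \mathcal{R}_u$, each term obeys $1/\sqrt{d_v} \leq 1/\sqrt{\delta_u}$, so the sum is at most $\abs{\mathcal{R}_u}/\sqrt{\delta_u}$. Finally, the original neighbours of $u$ partition into the deleted set $\mathcal{D}_u$ and the remaining set $\mathcal{R}_u$, giving $\abs{\mathcal{R}_u} = d_u - \Delta_u^-$, which produces the claimed right-hand side.

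The main obstacle is the absolute-value analysis in the scalar estimate: one must confirm that the binding case is the lower bound $1-\alpha_u$ on the denominator rather than $1+\alpha_u$, and it is exactly here that the hypothesis $\alpha_u < 1$ is essential, both to keep each $1+\beta_w$ bounded away from zero and to keep $\alpha_u/(1-\alpha_u)$ finite. Everything else is elementary algebra and monotonicity.
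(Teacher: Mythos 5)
Your proof is correct, but it reaches the key per-term estimate by a genuinely different and more elementary route than the paper. The paper views the left-hand side as a bivariate function $f(\Delta_u,\Delta_v)$ on the rectangle $[-\alpha_u d_u,\alpha_u d_u]\times[-\alpha_u d_v,\alpha_u d_v]$, uses partial derivatives to rule out interior critical points, pushes the maximum to the boundary and then to the four corners, and compares the corner values pairwise to conclude that the global maximum is attained at $(-\alpha_u d_u,-\alpha_u d_v)$, where it equals $\bigl(\tfrac{\alpha_u}{1-\alpha_u}\bigr)\tfrac{1}{\sqrt{d_ud_v}}$. You instead substitute $d_w'=d_w(1+\beta_w)$ with $\beta_w\in[-\alpha_u,\alpha_u]$ and locate $\sqrt{d_ud_v/(d_u'd_v')}$ in the interval $[1/(1+\alpha_u),\,1/(1-\alpha_u)]$ by pure monotonicity, so that $\abs{1-\sqrt{d_ud_v/(d_u'd_v')}}\leq \alpha_u/(1-\alpha_u)$ follows from comparing the two endpoints — no calculus, no boundary analysis. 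This is shorter and more transparent; what the paper's optimisation argument buys in exchange is an explicit identification of the maximiser and hence the tightness of the per-term bound, which your argument also delivers implicitly (the binding endpoint $-\alpha_u/(1-\alpha_u)$ corresponds to $\beta_u=\beta_v=-\alpha_u$) but does not emphasise. Your treatment of the second inequality — bounding $d_v\geq\delta_u$ and using $\abs{\mathcal{R}_u}=d_u-\Delta_u^-$ — is identical to the paper's.
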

\begin{proof}
The main part of proving this Lemma is proving the following identity:
\begin{equation}
\label{eq:lemmamaineq}
    \abs{\frac{1}{\sqrt{d_ud_v}} - \frac{1}{\sqrt{d_u'd_v'}}} \leq \left(\frac{\alpha_u}{1-\alpha_u}\right)\frac{1}{\sqrt{d_u d_v}}.
\end{equation}

To prove Eq.~(\ref{eq:lemmamaineq}), we will maximise the left hand size bearing in mind that the constraint $\alpha_u \in [0, 1)$ limits the possible values $d_u'$ and $d_v'$ can take.  To maximise the left hand size we will consider it as a function of $\Delta_u$ and $\Delta_v$ and use partial derivatives to reason where the maxima is.

The left hand side of Eq.~(\ref{eq:lemmamaineq}) can be written as a function of the change in degree of node $u$ and $v$
\begin{equation}
    f(\Delta_u, \Delta_v) = \abs{\frac{1}{\sqrt{d_u d_v}} - \frac{1}{\sqrt{(d_u+\Delta_u)(d_v+\Delta_v)}}}
\end{equation}
on the domain $\Omega = [-\alpha_u d_u, \alpha_u d_u] \times [-\alpha_u d_v, \alpha_u d_v]$. Because the domain is such that $\Omega \subset (0, d_\text{max}]^2$, where $d_\text{max}$ is the largest degree of all nodes in graph $\mathcal{G}$, this function has no poles. The function $f$ has zeros in its domain and the function is non-differentiable at these points but differentiable away from these points. Since the function is non-negative but not identically zero, the local maximas are strictly positive. Therefore we can consider critical points of the function away from the zeros. We consider the partial derivative of $f$ with respect to $\Delta_u$. The chain rule gives us: 
\begin{align}
    \frac{\partial f(\Delta_u, \Delta_v)}{\partial \Delta_u} &= \sign\large(f(\Delta_u, \Delta_v)\large) \ \frac{\partial}{\partial \Delta_u} \left( \frac{1}{\sqrt{d_u d_v}} - \frac{1}{\sqrt{(d_u+\Delta_u)(d_v+\Delta_v)}}\right) \\
    &= \sign\large(f(\Delta_u, \Delta_v)\large) \ \frac{\partial}{\partial \Delta_u} \left(\frac{-1}{\sqrt{(d_u+\Delta_u)(d_v+\Delta_v)}}\right). \label{eq:lemma1proof_1}
\end{align}
Because we only consider the function $f$ away from its zeros, $\sign\large(f(\Delta_u, \Delta_v)\large)$ will be non-zero. Therefore, the partial derivative of $f$ with respect to $\Delta_u$ is zero if and only if the partial derivative on the right hand side of Eq.~(\ref{eq:lemma1proof_1}) is zero. By setting $z=(d_u+\Delta_u)(d_v+\Delta_v)$ we use the chain rule again to obtain:
\begin{equation}
\frac{\partial}{\partial \Delta_u} \frac{-1}{\sqrt{(d_u+\Delta_u)(d_v+\Delta_v)}} = \frac{\partial}{\partial z} \frac{-1}{\sqrt{z}} \frac{\partial z}{\partial \Delta_u} = \frac{1}{2z^{3/2}} (d_v + \Delta_v) = \frac{d_v + \Delta_v}{2\big((d_u+\Delta_u)(d_v+\Delta_v)\big)^{3/2}}.
\label{eq:lemma1proof_2}
\end{equation}

The partial derivative for $f$ with respect to $\Delta_u$ is zero if and only if the right hand side of Eq.~(\ref{eq:lemma1proof_2}) is zero. One can see this partial derivative is zero if and only if $d_v+\Delta_v=0$, but our constraints are such that $d_v + \Delta_v \geq d_v - \alpha_u d_v > 0$. This holds by symmetry for the other partial derivative. Since there are no local maximas in the domain of the function and the function is continuous, the maxima must lie on the boundary of the domain which we write as $\delta \Omega$. 

The boundary $\delta \Omega$ describes a closed rectangle. One can parameterise the sides of this rectangle by either fixing $\Delta_u \in \set{-\alpha d_u, \alpha d_u}$ or by fixing $\Delta_v \in \set{-\alpha d_v, \alpha d_v}$. Without loss of generality consider fixing $\Delta_u = -\alpha d_u$ to give a 1D function in $\Delta_v$ describing a side of the boundary of this rectangle. The derivative of this 1D function is exactly the partial derivative of $f$ with respect to $\Delta_v$. We proved that both partial derivatives are non-zero in $\Omega$, so in particular the derivative of this 1D function is non-zero and thus the maxima must lie on its boundary. The boundary of a side are the two corner end points in the rectangle adjacent to the side. By considering all four sides we can deduce that the maximum of the function must lie on one or more of the four corners. In the following we will prove that the function has a single maximum and show it takes this value at the corner $(-\alpha_u d_u, -\alpha_u d_v)$.

We first show that the function maps two of the corners to the same value: 
\begin{align}
    f(-\alpha_u d_u, \alpha_u d_v) &=  \abs{\frac{1}{\sqrt{d_u d_v}} - \frac{1}{\sqrt{(d_u-\alpha_u d_u)(d_v+\alpha_u d_v)}}} \\
    &=\abs{\frac{1}{\sqrt{d_u d_v}} - \frac{1}{\sqrt{(1-\alpha_u)(1+\alpha_u)}\sqrt{d_u d_v}}} \\
    &=\abs{\frac{1}{\sqrt{d_u d_v}} - \frac{1}{\sqrt{(d_u+\alpha_u d_u)(d_v-\alpha_u d_v)}}} = f(\alpha_u d_u, -\alpha_u d_v).
\end{align}
Next, we show that $f(-\alpha_u d_u, -\alpha_u d_v) \geq f(-\alpha_u d_u, \alpha_u d_v) $. We do this by proving that $f(-\alpha_u d_u, -\alpha_u d_v) - f(-\alpha_u d_u, \alpha_u d_v)$ is non-negative. Writing this out we have that: 
\begin{equation}
f(-\alpha_u d_u, -\alpha_u d_v) - f(-\alpha_u d_u, \alpha_u d_v) =  \abs{\frac{1}{\sqrt{d_ud_v}} - \frac{1}{(1-\alpha_u)\sqrt{d_ud_v}}} - \abs{\frac{1}{\sqrt{d_ud_v}} - \frac{1}{\sqrt{(1-\alpha_u^2)d_ud_v}}}.
\end{equation}
Since $(1-\alpha_u)$ and $\sqrt{1-\alpha_u^2}$ both lie in the interval $(0,1]$, we know that both the values inside the absolute values are non-positive. By negating the values inside and removing the absolute value we get that this is equal to:
\begin{align}
f(-\alpha_u d_u, -\alpha_u d_v) - f(-\alpha_u d_u, \alpha_u d_v)     &= \left(\frac{1}{(1-\alpha_u)\sqrt{d_ud_v}} - \frac{1}{\sqrt{d_ud_v}} \right)  - \left( \frac{1}{\sqrt{(1-\alpha_u^2)d_ud_v}} - \frac{1}{\sqrt{d_ud_v}} \right) \\
    &= \frac{1}{(1-\alpha_u)\sqrt{d_ud_v}} -    \frac{1}{\sqrt{(1-\alpha_u^2)d_ud_v}}.  
\end{align}
This quantity is non-negative if and only if $(1-\alpha_u) \leq \sqrt{(1-\alpha_u^2)}$. By squaring both sides and rearranging we get that this is true if and only if $\alpha_u \in [0, 1]$ which holds true by our assumption.  

Finally, we show that $f(-\alpha_u d_u, -\alpha_u d_v) \geq f(\alpha_u d_u, \alpha_u d_v)$. 
Similar to before we show the following is non-negative: 
\begin{align}
f(-\alpha_u d_u, -\alpha_u d_v) - f(\alpha_u d_u, \alpha_u d_v) &=  \abs{\frac{1}{\sqrt{d_ud_v}} - \frac{1}{(1-\alpha_u)\sqrt{d_ud_v}}} - \abs{\frac{1}{\sqrt{d_ud_v}} - \frac{1}{(1+\alpha_u)\sqrt{d_ud_v}}} \\
&=\left(\frac{1}{(1-\alpha_u)\sqrt{d_ud_v}} - \frac{1}{\sqrt{d_ud_v}} \right) - \left( \frac{1}{\sqrt{d_ud_v}} - \frac{1}{(1+\alpha_u)\sqrt{d_ud_v}} \right) \\
&= \frac{1}{\sqrt{d_ud_v}}\left(\frac{1}{1-\alpha_u} + \frac{1}{1+\alpha_u}-2 \right) \\
&= \frac{1}{\sqrt{d_u d_v}}\left(\frac{-2\alpha_u^2}{(\alpha_u-1)(\alpha_u+1)}\right).
\end{align}
The above is non-negative for $\alpha_u \in [0, 1)$. This proves that $f(-\alpha_u d_u, -\alpha_u d_v)$ is a maxima of the four corners hence a global maxima of the function $f$. We show that the inequality in Eq.~(\ref{eq:lemmamaineq}) holds and is tight by showing equality holds when the left hand side takes its largest value among all possible values of $d_u'$ and $d_v'$ (equivalently $\Delta_u$ and $\Delta_v$):
\begin{align}
\max_{\substack{\Delta_u \in [-\alpha d_u, \alpha d_u] \\ \Delta_v \in [-\alpha d_v, \alpha d_v]}} f(\Delta_u, \Delta_v) = f(-\alpha_u d_u, -\alpha_u d_v) = \frac{1}{(1-\alpha_u)\sqrt{d_ud_v}} - \frac{1}{\sqrt{d_ud_v}} = \left(\frac{\alpha_u}{1-\alpha_u}\right)\frac{1}{\sqrt{d_u d_v}}.
\end{align}

The first inequality (Eq.~(\ref{lemm:firstinequality})) in Lemma \ref{lem:alpha} follows immediately from this. The second inequality (Eq.~(\ref{lemm:secondinequality})) comes from noting that $d_v \geq \delta_u \implies 1/\sqrt{d_ud_v} \leq 1/\sqrt{d_u\delta_u}$ and that $\abs{\mathcal{R}_u} = d_u - \Delta_u^-$ giving:
\begin{equation}
    \sum_{v \in \mathcal{R}_u} \left(\frac{\alpha_u}{1-\alpha_u}\right)\frac{1}{\sqrt{d_u d_v}} \leq \sum_{v \in \mathcal{R}_u} \left(\frac{\alpha_u}{1-\alpha_u}\right)\frac{1}{\sqrt{d_u \delta_u}} = \left(\frac{\alpha_u}{1-\alpha_u}\right)\frac{d_u-\Delta_u^-}{\sqrt{ d_u \delta_u}}.
\end{equation}

\end{proof}

\begin{thm}
Let $\alpha_u \in [0, 1)$ for all nodes $u \in \V$. Then the following holds:
\begin{equation*}
    \norm{\E}_2 \leq \max_{u \in \V} \bigg\{ \frac{\Delta_u^-}{\sqrt{  d_u \delta_u}} +  \frac{\Delta_u^+}{\sqrt{ d_u' \delta_u'}} +  \left(\frac{\alpha_u}{1-\alpha_u}\right)\frac{d_u-\Delta_u^-}{\sqrt{ d_u \delta_u }} \bigg\}.
\end{equation*}
\end{thm}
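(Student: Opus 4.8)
The plan is to assemble the theorem directly from the inequalities already established in Section~\ref{sec:interpretable}, so the task is to verify that the chain of bounds closes up cleanly rather than to prove anything genuinely new. First I would invoke Eq.~(\ref{eq:norm2leqmax}), which reduces the operator norm to the matrix one-norm via $\norm{\E}_2 \leq \norm{\E}_1 = \max_{u \in \V} \norm{\E_u}_1$. This step relies on the interpolation inequality $\norm{\E}_2^2 \leq \norm{\E}_1 \norm{\E}_\infty$ together with the observation that $\E = \lap_p - \lap$ is Hermitian, so that $\norm{\E}_1 = \norm{\E}_\infty$ and the bound simplifies. It therefore suffices to bound $\norm{\E_u}_1$ for an arbitrary fixed node $u$ and afterwards take the maximum over $\V$.

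Next I would invoke the closed-form expression Eq.~(\ref{eq:Eu}), which decomposes $\norm{\E_u}_1$ into three sums indexed respectively by the deleted edges $\mathcal{D}_u$, the added edges $\mathcal{A}_u$, and the remaining edges $\mathcal{R}_u$, and bound each sum separately. For the deleted-edge sum I would apply Eq.~(\ref{eq:del}), replacing each neighbour degree $d_v$ by the smallest neighbouring degree $\delta_u \leq d_v$ to obtain $\Delta_u^-/\sqrt{d_u \delta_u}$; for the added-edge sum I would apply the analogous Eq.~(\ref{eq:add}), but evaluated in the perturbed graph $\G_p$ with degrees $d_u'$ and $\delta_u'$, giving $\Delta_u^+/\sqrt{d_u' \delta_u'}$; and for the remaining-edge sum, which measures how the surviving Laplacian entries drift under the degree changes, I would invoke Lemma~\ref{lem:alpha} to obtain $\left(\frac{\alpha_u}{1-\alpha_u}\right)\frac{d_u-\Delta_u^-}{\sqrt{d_u \delta_u}}$. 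Summing the three bounds reproduces Eq.~(\ref{eq:finalbound}), and taking the maximum over $u \in \V$ and chaining with Eq.~(\ref{eq:norm2leqmax}) yields exactly the claimed bound Eq.~(\ref{eq:finalbound2}).

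The only place where the hypothesis $\alpha_u \in [0,1)$ is actually used is the third term, and this is where the single genuine difficulty of the whole argument resides: Lemma~\ref{lem:alpha} is valid precisely in the regime $\alpha_u < 1$, since for $\alpha_u \geq 1$ the admissible box of degree changes no longer excludes isolated nodes and the per-edge perturbation need not attain its maximum at an interior-free corner. That obstacle, however, is already discharged by the lemma, whose proof maximises $\abs{1/\sqrt{d_u d_v} - 1/\sqrt{d_u' d_v'}}$ over the box $[-\alpha_u d_u, \alpha_u d_u]\times[-\alpha_u d_v, \alpha_u d_v]$ and locates the maximiser at the corner $(-\alpha_u d_u, -\alpha_u d_v)$. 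Consequently I expect the theorem itself to be a short aggregation: the main care in the write-up is bookkeeping, in particular tracking which degree quantities are evaluated in $\G$ versus $\G_p$ and confirming that the assumption $\alpha_u < 1$ is carried uniformly over all $u$ so that every application of the lemma is licit.
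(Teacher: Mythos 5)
Your proposal is correct and follows exactly the paper's own route: the paper's proof of Theorem~\ref{thm:main} is precisely the substitution of Eq.~(\ref{eq:finalbound}) --- itself obtained by combining Eq.~(\ref{eq:del}), Eq.~(\ref{eq:add}), and Lemma~\ref{lem:alpha} applied to the decomposition in Eq.~(\ref{eq:Eu}) --- into Eq.~(\ref{eq:norm2leqmax}) and taking the maximum over nodes. You have simply expanded the same aggregation in more detail, and your identification of Lemma~\ref{lem:alpha} (and hence the hypothesis $\alpha_u \in [0,1)$) as the only substantive ingredient matches the paper.
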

\begin{proof}
If $\alpha_u \in [0, 1)$ for all nodes then Eq.~(\ref{eq:finalbound}) also holds for all nodes. Substituting Eq.~(\ref{eq:finalbound}) into Eq.~(\ref{eq:norm2leqmax}) gives the desired result.
\end{proof}

\section{Random graph models}
\label{appendix:graphmodels}
In this work we consider graphs without isolated nodes for simplicity. Although the bounds do not require graphs to be connected, we always consider the unperturbed graphs to be connected. To achieve this in practice we use rejection sampling, i.e., sampling from the random graph models until we sample a connected graph. We describe the random graph models we use in our experiments in detail below. Where available we use implementations provided by the NetworkX and PyGSP libraries. Summary statistics describing the degree distribution as well as distance between nodes are given in Table \ref{tab:graphs}.

\begin{table}[h]
\centering
\scalebox{0.95}{
\begin{tabular}{cccccc}
\toprule
             Graph &  Mean degree &  Degree standard deviation &  Average shortest path length &  Diameter & Degree correlation\\
\midrule
        K-Reg &        3.00 &       0.00 &     4.83 &  8.75 & N/A \\
        WS &        4.00 &       0.61 &     5.08 & 10.54 & -0.03 \\
        K-NN &        3.76 &       0.89 &     9.08 & 22.08 & -0.02 \\
        ER &        4.67 &       2.03 &     3.15 &  6.45 & -0.01\\
        Assortative &        4.64 &       2.04 &     4.29 & 11.44 & 0.80 \\
        BA &        5.82 &       4.71 &     2.59 &  4.41  & -0.15 \\
\bottomrule
\end{tabular}}
\caption{Summary statistics averaged across the generated graphs. The first two columns give the average and standard deviation of the degree sequence. The average shortest path length and diameter (largest shortest path length) measure node connectivity. The degree correlation measures assortativity.}
\label{tab:graphs}
\end{table}

\paragraph{Erdős-Rényi} Erdős-Rényi graphs are constructed by independently including each possible edge between any pair of nodes with probability $p$ \citep{gilbert1959random}. For sufficiently large graphs the graph will be connected with high probability if $p > \ln n /n$ and disconnected if $p < \ln n /n$. We thus chose $p= \ln n /n$. The degree distribution of Erdős-Rényi graphs is approximately a Poisson distribution. 

\paragraph{Barabási–Albert} Barabási–Albert graphs are randomly generated scale-free (with power-law degree distributions) graphs which are constructed using a preferential attachment mechanism \citep{albert2002statistical}. The graphs are constructed using parameters $n$, the number of nodes, and $m$, the number of edges that are preferentially attached. An initial graph is given by a star graph on $m+1$ nodes. Then, until the graph has $n$ nodes the following step is repeated. A new node is introduced and connected to $m$ existing nodes with probabilities proportional to their degrees. Barabási–Albert graphs are connected by construction. 

\paragraph{Watts-Strogatz} Watts-Strogatz graphs with appropriate parameters exhibit small-world properties such as small average path lengths and high clustering \citep{watts1998collective}. The graph begins as a ring lattice which is obtained by taking a cycle of nodes and connecting each node to its $K$ nearest neighbours. Then, each edge is rewired independently with probability $p$, which means deleting edge $i \sim j$ and adding edge $i \sim j'$ such that $j' \not \in \set{i, j}$ is selected uniformly at random. We connected each node to its $K=4$ neighbours for the initial configuration and rewire each edge with probability $p=0.1$. 

\paragraph{K-regular}
A K-regular graph is one such that the degree of every node is $K$. We use the algorithm described in \citet{steger1999generating} and implemented in NetworkX. For a fixed $K$, the algorithm samples graphs with $n$ nodes all of degree $K$ almost uniformly at random in the sense that the distribution approaches uniform in the limit $n \to \infty$. 

\paragraph{K-nearest neighbour}
A K-nearest neighbour graph is constructed by sampling uniformly at random $n$ points in the unit square $[0,1]^2$ to form the nodes. Then for each node an edge is connected to its $K$ nearest neighbours in terms of Euclidean distance. 

\paragraph{Assortative}
An assortative graph is one such that there is a high positive correlation between the degree of end points along edges. To generate assortative graphs we use a variant of the Xulvi-Brunet \& Sokolov (XBS) Algorithm \cite{xulvi2004reshuffling} applied to Erdős-Rényi graphs. The XBS algorithm iteratively rewires edges in the graph to increase assortativity whilst keeping the degree sequence fixed. At each step two edges, corresponding to four nodes, are chosen uniformly at random. With probability $p$, these edges are rewired in a way that increases the assortativity. Otherwise, the edges are rewired randomly. In our experiments we set this probability to $p=1$, meaning the assortativity increases in each step. In the original algorithm a graph can become disconnected; we discard an iteration if it disconnects the graph. Instead of running the algorithm for a fixed number of steps we run the algorithm until the degree correlation is at least $0.8$. We describe the algorithm in Algorithm \ref{alg:XBS}. 

\begin{algorithm}[h]
   \caption{XBS Algorithm}
   \label{alg:XBS}
\begin{algorithmic}[1]
   \STATE {\bfseries Input:} An initial graph $\mathcal{G}$, assortative threshold $a$
   \REPEAT
   \STATE Initialize $\mathcal{G}_{temp} \leftarrow \mathcal{G}$.
   \STATE Sample edges $u \sim v$ and $u' \sim v'$ from $\mathcal{G}_{temp}$ such that $u,v,u',v'$ are unique
   \STATE Delete edges $u \sim v$ and $u' \sim v'$ from $\mathcal{G}_{temp}$
   \STATE Connect the two nodes in $\mathcal{G}_{temp}$ from $\set{u,v,u',v'}$ with the highest degree\;
   \STATE Connect the two nodes in $\mathcal{G}_{temp}$ from $\set{u,v,u',v'}$ with the lowest degree\;
   \IF{$\mathcal{G}_{temp}$ is connected}
       \STATE $\mathcal{G} \leftarrow \mathcal{G}_{temp}$
   \ENDIF
   \UNTIL{Assortativity of $\mathcal{G}$ more than or equal to $a$}
   \STATE {\bfseries Output:} Perturbed graph $\mathcal{G}$
\end{algorithmic}
\end{algorithm}

\section{Perturbation strategies}
\label{appendix:perturbationstrategies}
We make use of four random strategies (Add, Delete, Add/Delete and Rewire), one strategy which gives perturbations according to an optimisation based search strategy (PGD), and a strategy which is based on the theory described in Section \ref{sec:interpretable} (Robust). All strategies delete or add a fixed number of edges. The rewiring operation is depicted graphically in Fig.~\ref{fig:rewiring}. We describe Robust and PGD in more detail.

\begin{figure}
    \centering
    \begin{subfigure}{0.4\columnwidth}
        \centering
        \begin{tikzpicture}
\Vertex[x=0,y=0, label=$u$]{u}
\Vertex[x=0,y=1, label=$v$]{v}
\Vertex[x=1,y=0, label=$u'$]{uprime}
\Vertex[x=1,y=1, label=$v'$]{vprime}
\Edge[RGB,color={162.56 ,  19.968,  47.104}](u)(v)
\Edge[RGB,color={162.56 ,  19.968,  47.104}](uprime)(vprime)
\Vertex[x=-1,y=1.5, Pseudo]{dummy13}
\Vertex[x=-1,y=1, Pseudo]{dummy12}
\Vertex[x=-1,y=0, Pseudo]{dummy11}
\Edge[bend=25, opacity=0.2](u)(dummy11)
\Edge[bend=15, opacity=0.2](u)(dummy12)
\Edge[bend=-15, opacity=0.2](v)(dummy12)
\Edge[bend=-25, opacity=0.2](v)(dummy13)
\Vertex[x=2,y=1.5, Pseudo]{dummy23}
\Vertex[x=2,y=1, Pseudo]{dummy22}
\Vertex[x=2,y=0, Pseudo]{dummy21}
\Edge[bend=25, opacity=0.2](uprime)(dummy21)
\Edge[bend=-15, opacity=0.2](vprime)(dummy22)
\Edge[bend=25, opacity=0.2](vprime)(dummy23)
\end{tikzpicture}
        \caption{Before rewiring}
    \end{subfigure}
    \begin{subfigure}{0.4\columnwidth}
        \centering
        \begin{tikzpicture}
\Vertex[x=0,y=0, label=$u$]{u}
\Vertex[x=0,y=1, label=$v$]{v}
\Vertex[x=1,y=0, label=$u'$]{uprime}
\Vertex[x=1,y=1, label=$v'$]{vprime}
\Edge[RGB,color={ 0.   , 114.432, 189.696}](u)(uprime)
\Edge[RGB,color={ 0.   , 114.432, 189.696}](v)(vprime)
\Vertex[x=-1,y=1.5, Pseudo]{dummy13}
\Vertex[x=-1,y=1, Pseudo]{dummy12}
\Vertex[x=-1,y=0, Pseudo]{dummy11}
\Edge[bend=25, opacity=0.2](u)(dummy11)
\Edge[bend=15, opacity=0.2](u)(dummy12)
\Edge[bend=-15, opacity=0.2](v)(dummy12)
\Edge[bend=-25, opacity=0.2](v)(dummy13)
\Vertex[x=2,y=1.5, Pseudo]{dummy23}
\Vertex[x=2,y=1, Pseudo]{dummy22}
\Vertex[x=2,y=0, Pseudo]{dummy21}
\Edge[bend=25, opacity=0.2](uprime)(dummy21)
\Edge[bend=-15, opacity=0.2](vprime)(dummy22)
\Edge[bend=25, opacity=0.2](vprime)(dummy23)
\end{tikzpicture}
        \caption{After rewiring}
    \end{subfigure}
    \caption{In the rewiring operation the red edges are deleted and the blue edges are added. The degree of each node remains the same.}
    \label{fig:rewiring}
\end{figure}
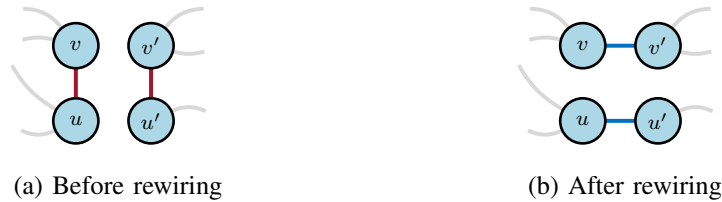

\paragraph{Robust} The robust strategy works by iteratively building the perturbed graph. Consider two unique nodes $u$ and $v$, then we say an edge is flipped between them if an edge $u \sim v$ exists and is deleted, or if the edge does not exist and is added. A single iteration consists of finding nodes $u$ and $v$ that have not been flipped in previous iterations such that $\norm{\E}_1$ is minimal. The steps of the algorithm are described in Algorithm \ref{alg:robust}.

\begin{algorithm}[h]
   \caption{Robust Algorithm}
   \label{alg:robust}
\begin{algorithmic}[1]
   \STATE {\bfseries Input:} An initial graph $\mathcal{G}$, a budget $B$
   \STATE Initialize set of flipped edges $\mathcal{F} \leftarrow \emptyset$.
   \WHILE{$\abs{\mathcal{F}} < B$}
        \STATE Initialize candidate edge $e$
        \STATE Initialize candidate edge norm $f \leftarrow \infty$
        \FOR{Potential edge $e' \in \set{1,\ldots,n}^2 \setminus \mathcal{F}$}
            \STATE Initialize $\mathcal{G}_{temp} \leftarrow \mathcal{G}$
            \STATE Flip edge $e'$ in $\mathcal{G}_{temp}$
            \STATE Calculate normalised Laplacian matrix $\lap_{temp}$ of $\mathcal{G}_{temp}$
            \IF{$\norm{\lap_{temp}}_1 < f'$}
                \STATE $e \leftarrow e'$
                \STATE $f \leftarrow \norm{\lap_{temp}}_1$
            \ENDIF
        \ENDFOR
        \STATE Flip edge $e$ in $\mathcal{G}$
        \STATE $F \leftarrow F \cup \set{e}$
   \ENDWHILE
   \STATE {\bfseries Output:} Perturbed graph $\mathcal{G}$
\end{algorithmic}
\end{algorithm}

\paragraph{PGD} Projected gradient descent is a variant of gradient descent where after each gradient update a projection operation is applied. This strategy aims to find adversarial examples and follows closely the strategy described in \citet{pgd}, with three modifications. The first modification is that during the sampling procedure \citep[Algorithm 1 (step 5)]{pgd}, we not only discard perturbed graphs that are over budget, but also those containing isolated nodes.. The second modification is that we perform gradient ascent on the relative error between the original signal (before noise is added) and the denoised signal (estimate). This is different from the negative cross-entropy or Carlili-Wagner loss functions used in \citet{pgd} which are better suited for classification problems. Finally, our filter involves the inverse of a matrix which can become singular making it not possible to propagate gradients. We describe how we calculate the gradient used in \citet[Algorithm 2 (step 3)]{pgd} in Algorithm \ref{alg:pgd}. The variables $\mathbf{A'}, \mathbf{s}$ and $\mathbf{S}$ are described further in \citet{pgd}. We include additional steps 3 and 4 to improve the stability of the filtering operation (step 6). Step 4 projects negative values to $0$ and values above $1$ to $1$. Step 6 makes use of the lower–upper (LU) decomposition for solving linear systems. We use a learning rate of $\eta_t = 200/\sqrt{t}$, and implement $T=200$ iterations and $K=250$ random trials. 

\begin{algorithm}[h]
\caption{Numerically stable gradient calculation}
\label{alg:pgd}
\begin{algorithmic}[1]
    \STATE {\bfseries Input:} Adjacency matrix $\mathbf{A}$, probability vector $\mathbf{s}$, target signal $\mathbf{y}$, noisy signal $\x$.
    \STATE $\mathbf{A'} = \mathbf{A} + (\mathbf{\bar A} - \mathbf{A}) \circ \mathbf{S}$ \citep[Eq.~4]{pgd} where $\mathbf{S}$ is the matrix form of $\mathbf{s}$
    \STATE $\mathbf{A'} = \mathbf{A'} + \diag (\mathcal{N}(\mathbf{0}_n, 10^{-3} \times \mathbf{I}_n))$
    \STATE $\mathbf{A'} = \clamp (\mathbf{A'})$
    \STATE Calculate Laplacian $\lap'$ of $\mathbf{A'}$ 
    \STATE Calculate denoised signal $\mathbf{\hat y} = (\mathbf{I}_n+\lap')^{-1}\x$
    \STATE Calculate relative error loss $\mathcal{L} = \norm{\mathbf{y} -  \mathbf{\hat y}} / \norm{\mathbf{y}}$
    \STATE Calculate gradient $\nabla_s \mathcal{L}$
    \STATE {\bfseries Output: $\nabla_s \mathcal{L}$}
\end{algorithmic}
\end{algorithm}



\section{Additional results}
\label{appendix:results}

\subsection{How close are the relative output distance to the filter distance?}
\label{sec:rod-fd}
In Section \ref{sec:linearlystable} we have established a relationship between the relative output distance in Eq.~(\ref{eq:rod}) and the filter distance in Eq.~(\ref{eq:filter_dist}). The looseness of the bound given in Eq.~(\ref{eq:filter_dist}) in shown in Fig.~\ref{fig:rod-fd}. As we can see the PGD strategy gives rise to a tighter inequality compares to other strategies.

\begin{figure}
\centering
    \centering
    \includegraphics[width=0.5\textwidth]{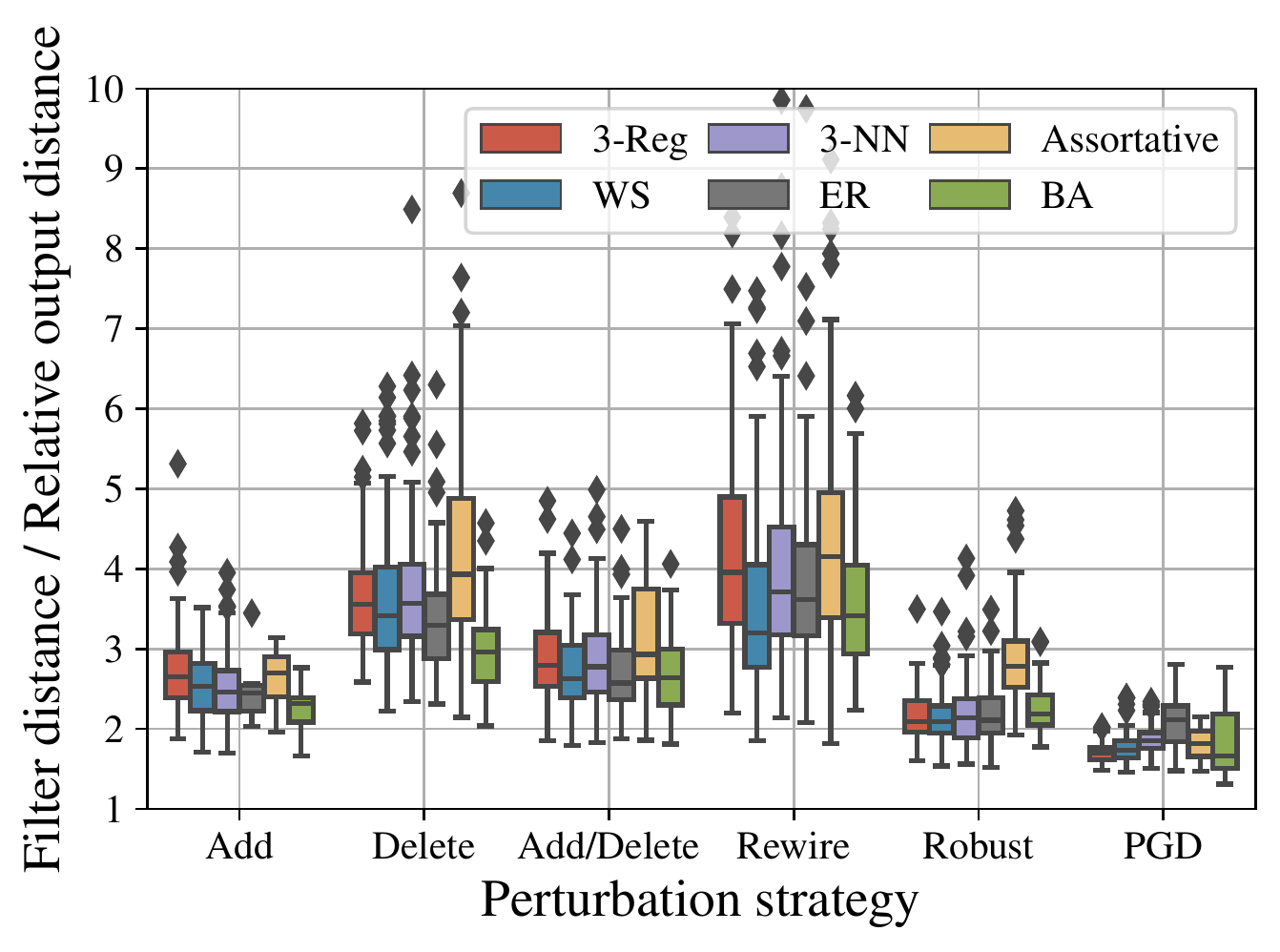}
\caption{Looseness of the bound relating the relative output distance to the filter distance.}  
\label{fig:rod-fd}
\end{figure}

\subsection{How tight is the linear stability bound?}
Linearly stable filters have the property that there exists an upper bound on the filter distance which is linearly proportional to the error norm. In the previous works of \citet{levie2019transferability} and \citet{kenlay2020stability} it has also been shown experimentally that the relationship is linear in practice. The low-pass filter we use in experiments has a stability constant of one meaning $\norm{f(\lap) - f(\lap_p)}_2 \leq \norm{\E}_2$. Our experiments (Fig.~\ref{fig:olexperiments}) show the looseness of this bound to be at least $1.5$ in all our experiments.

\begin{figure}[t]
\centering
\includegraphics[width=0.5\textwidth]{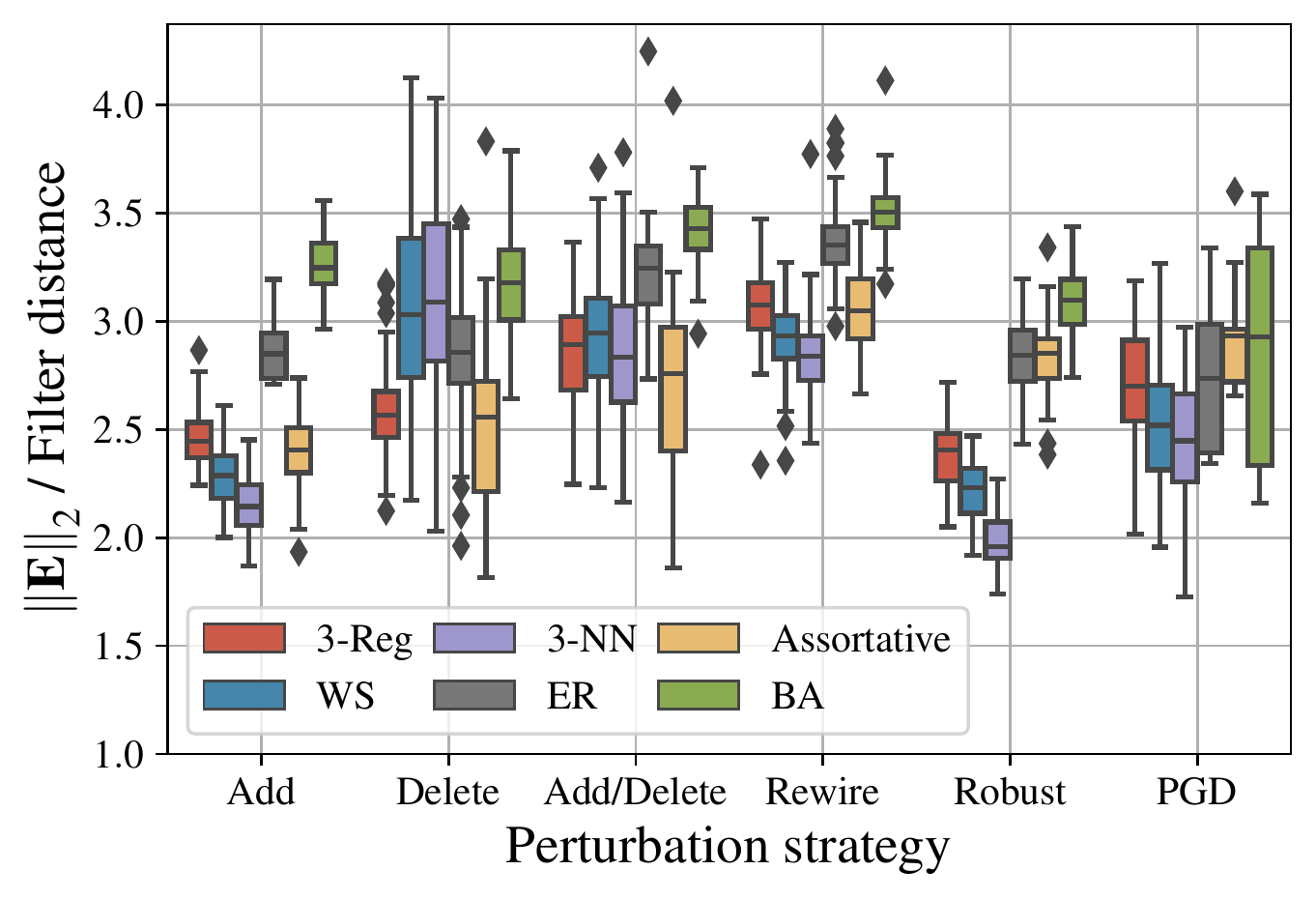}
\caption{Looseness of the linear stability bound.}    
\label{fig:olexperiments}
\end{figure}

\subsection{Validity of experiments}
As mentioned in Section \ref{sec:experiments}, some experiments give $\alpha_u > 1$ for some node $u$ in the graph meaning that strictly speaking the bound of Theorem \ref{thm:main} cannot be applied. Therefore, we discard the results of these experiments from our experimental analysis in the paper. Fig.~\ref{fig:is_valid} shows the proportion out of $100$ experiments we run which are valid (in the sense that the assumption on $\alpha_u$ is satisfied) for all combinations of random graph model and perturbation strategy. As noted in Section \ref{sec:interpretable}, for the assumption on $\alpha_u$ to be violated the degree of at least one node must at least double after perturbation. Since Delete only decreases the degree of nodes, and Rewire preserves the degree of all nodes, experiments with these perturbation strategies are always valid as expected. The graph models with high-variance degree distributions (ER, BA, Assortative) tend to have the assumption violated more often. This is possibly due to a large number of leaf nodes, to which if a single edge is added the assumption on $\alpha_u$ would be violated.

\begin{figure*}
    \centering
    \includegraphics[width=0.5\textwidth]{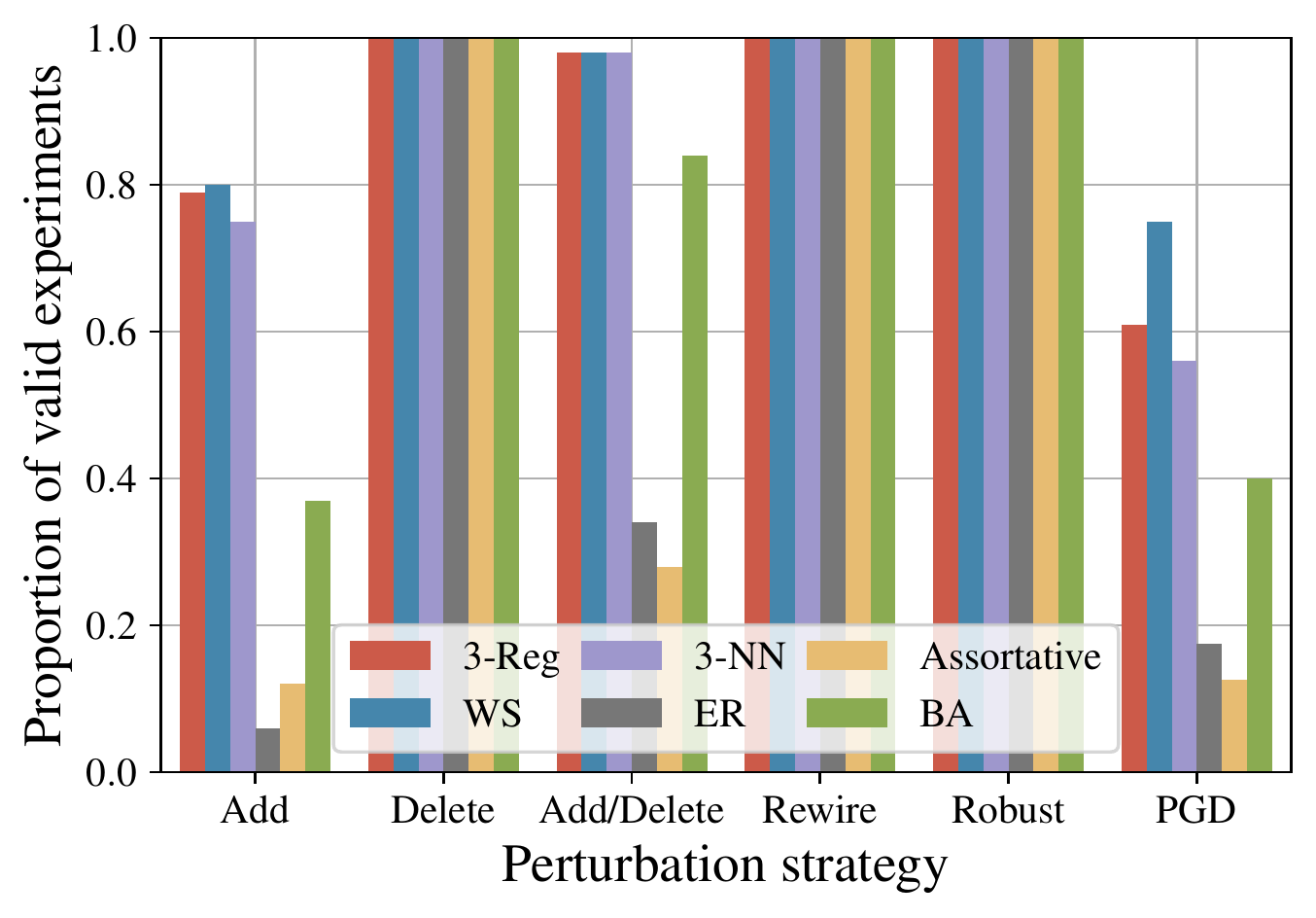}
    \caption{Proportion of experiments which are valid ($\alpha_u \in [0, 1)$ for all nodes $u \in \V$) across all graph types and perturbation strategies.}
    \label{fig:is_valid}
\end{figure*}

\subsection{How tight is the bound $||\E||_2 \leq ||\E||_1$?}

In Section \ref{sec:boundontwonorm} we consider the tightness of the inequality $||\E||_2 \leq ||\E||_1$ for various random graph models and perturbation strategies. For each strategy we plot the absolute value of $||\E||_2$ and $||\E||_1$ for all random graph models in Fig.~\ref{fig:normsa}. We plot just 5 repeats for clarity. In Fig.~\ref{fig:normsb} we plot the looseness of the bound.  

\begin{figure*}
    \centering
    \includegraphics[width=0.4\textwidth]{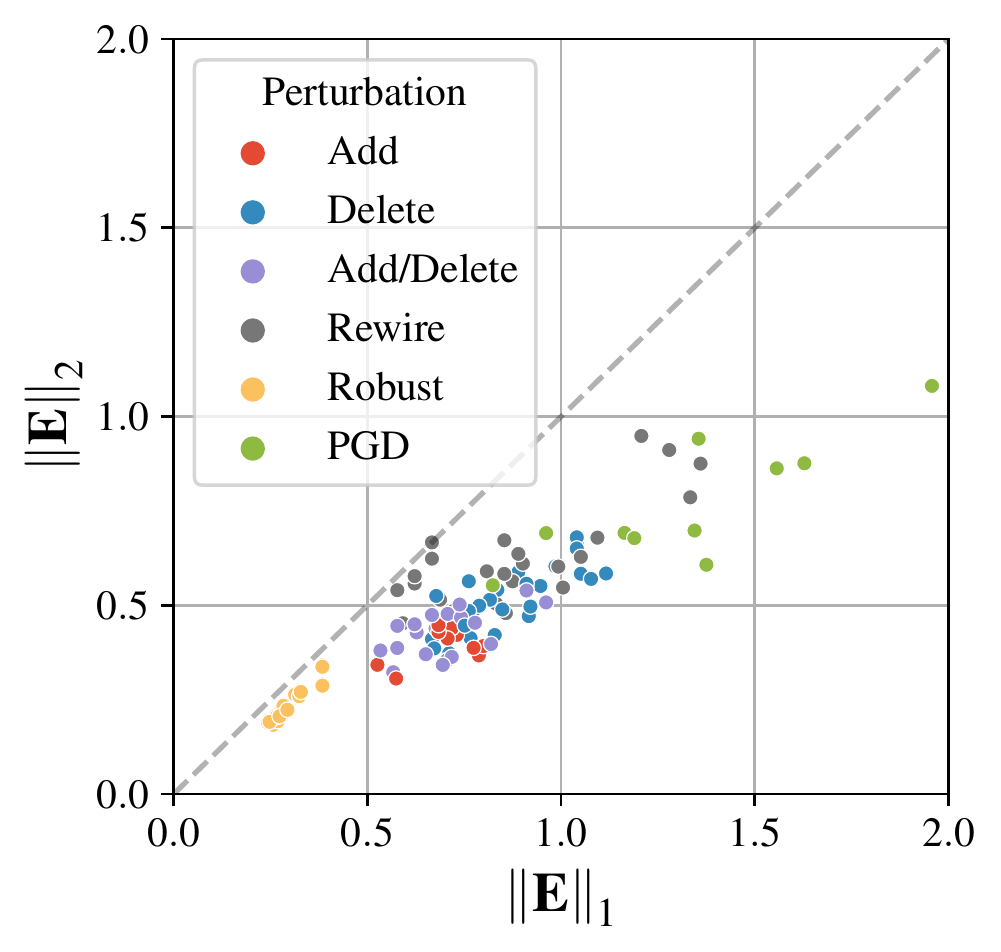}
    \caption{Comparison of $\norm{\E}_1$ and $\norm{\E}_2$. The dashed line represents $\norm{\E}_2=\norm{\E}_1$.}
    \label{fig:normsa}
\end{figure*}

\begin{figure*}
    \centering
    \includegraphics[width=0.5\textwidth]{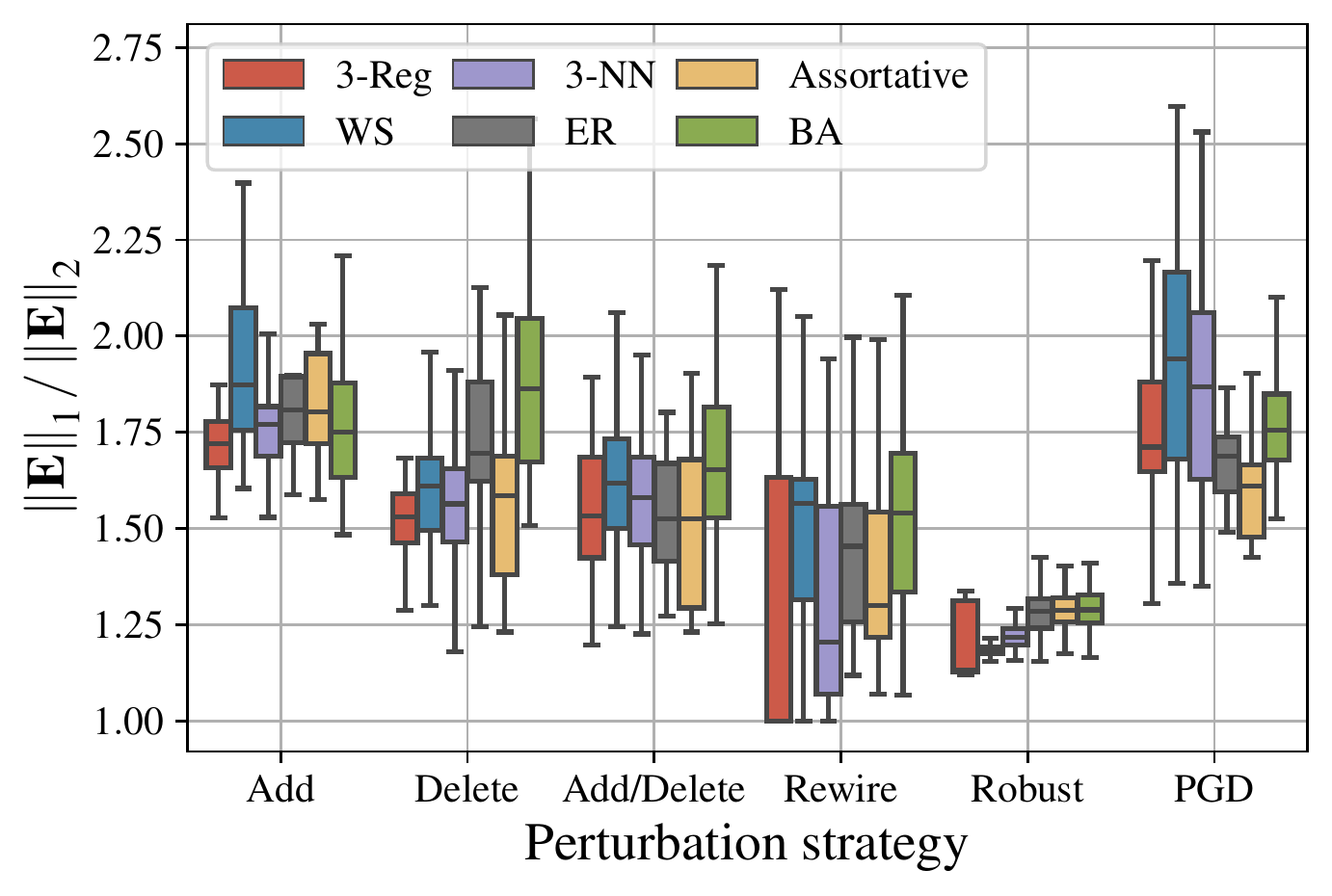}
    \vspace{-0.4cm}
    \caption{Looseness of the inequality $\norm{\E}_1 \leq \norm{\E}_2$ under different perturbation strategies and random graph models.}
    \label{fig:normsb}
\end{figure*}

\subsection{How tight are the bounds on $||\E||_1$ and $||\E||_2$?}
In Section \ref{sec:holderexperiments} we consider the looseness of Eq.~\ref{eq:finalbound}. We can consider the looseness for each component seperately which is shown in Fig.~\ref{fig:bound_all_terms}. It can be seen that the bound on the third term is the loosest. 

\begin{figure*}
    \centering
    \begin{subfigure}[b]{0.3\textwidth}
        \centering
        \includegraphics[width=\textwidth]{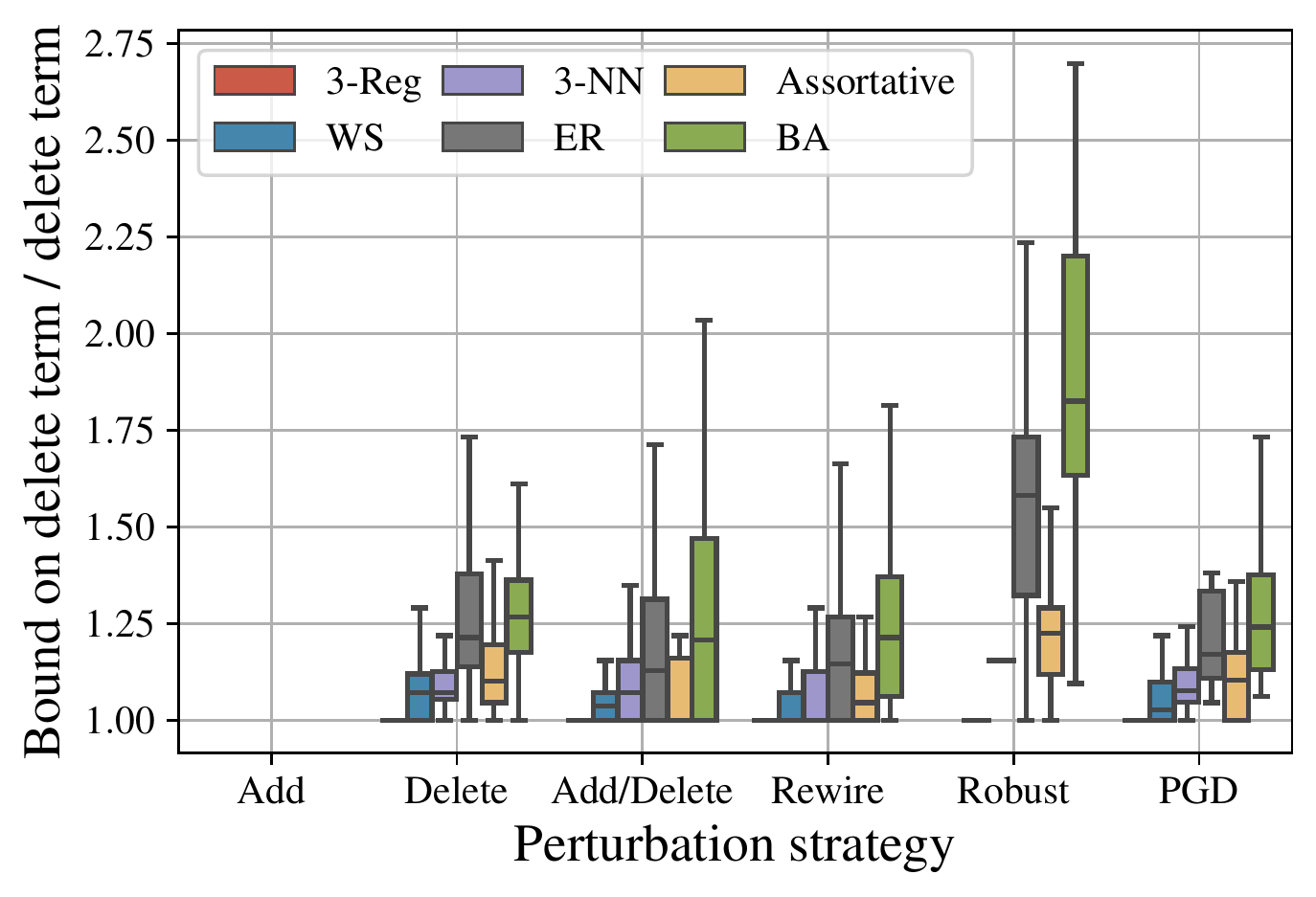}
    \end{subfigure}
        \begin{subfigure}[b]{0.3\textwidth}
        \centering
        \includegraphics[width=\textwidth]{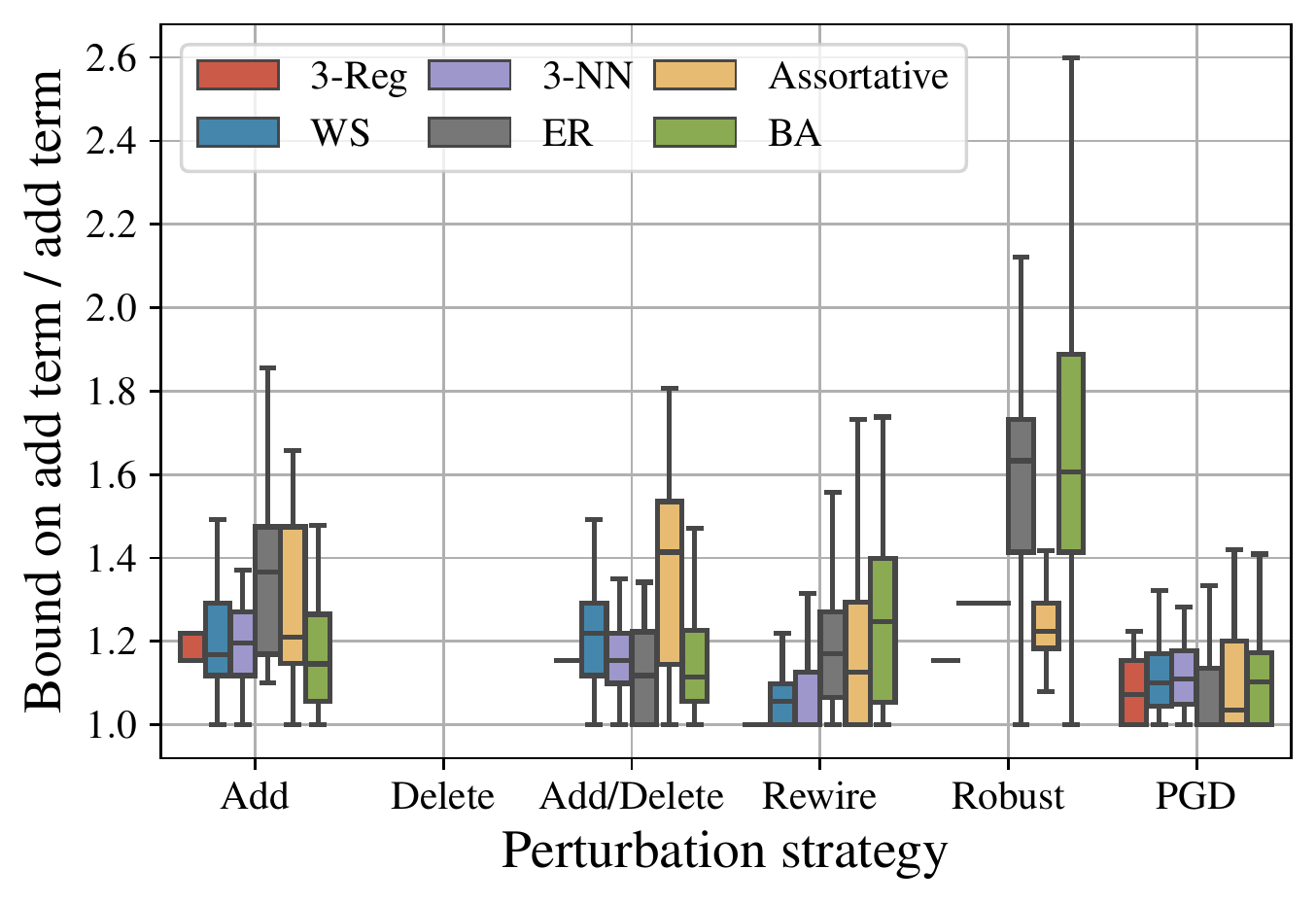}
    \end{subfigure} 
    \begin{subfigure}[b]{0.3\textwidth}
        \centering
        \includegraphics[width=\textwidth]{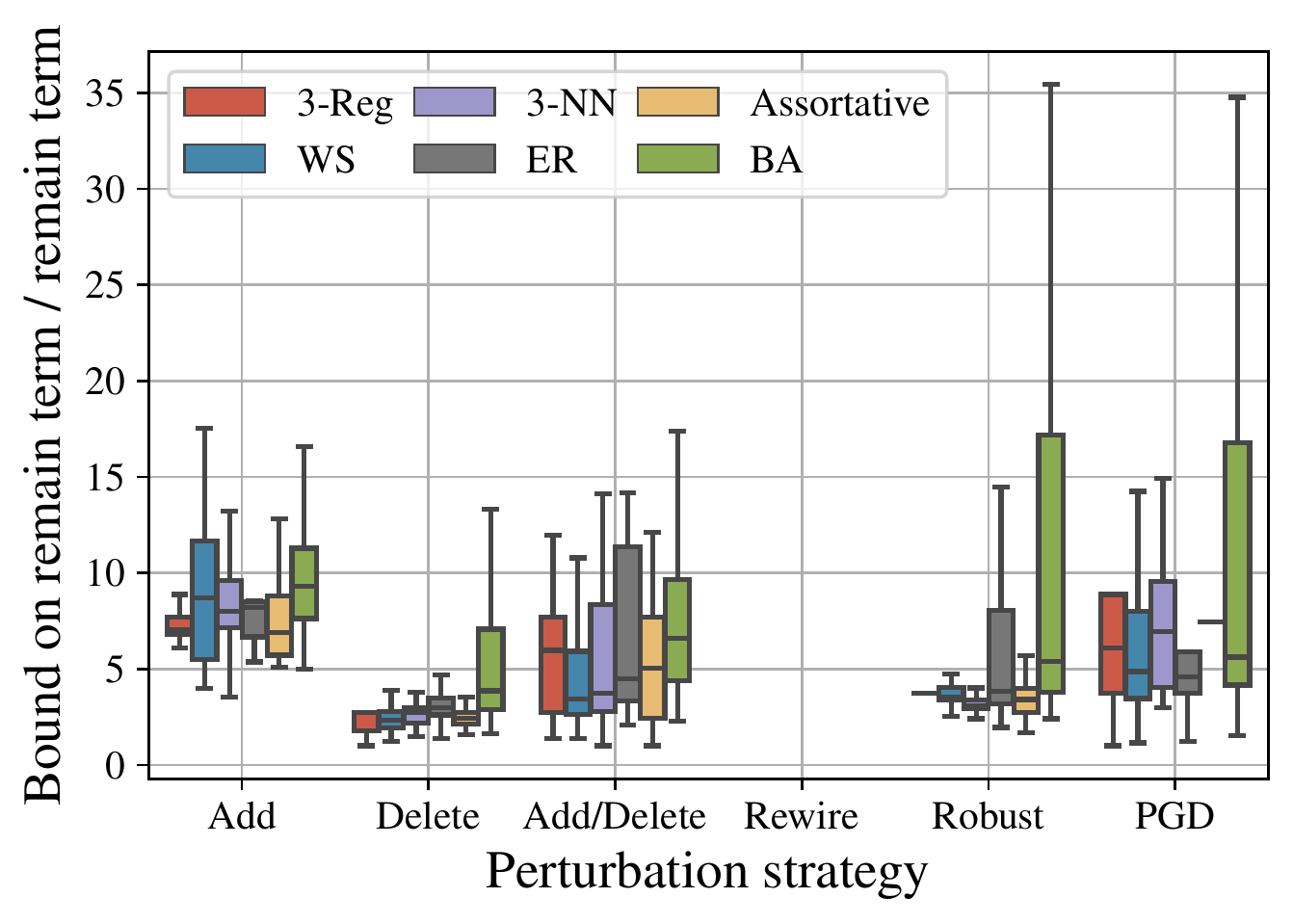}
    \end{subfigure}
    \caption{Looseness of the bound for the three terms in Eq.~(\ref{eq:Eu}). The first term, as well as the bound for the first term, evaluate to zero for the Add strategy and thus looseness is undefined. The same applies to the second term for the Delete strategy, and the third term for the Rewire strategy.}
    \label{fig:bound_all_terms}
\end{figure*}

\end{document}